\documentclass{article}

\PassOptionsToPackage{numbers, compress}{natbib}

\usepackage[final]{neurips_2025}




\usepackage[utf8]{inputenc} 
\usepackage[T1]{fontenc}    
\usepackage{hyperref}       
\usepackage{url}            
\usepackage{booktabs}       
\usepackage{amsfonts}       
\usepackage{nicefrac}       
\usepackage{microtype}      
\usepackage[table, xcdraw]{xcolor}         
\usepackage{subcaption}


\usepackage{tikz}
\usepackage{amsmath}
\usepackage{amssymb}
\usepackage{mathtools}
\usepackage{amsthm}
\usepackage{colortbl}
\usepackage{makecell}
\usepackage{float}
\usepackage{booktabs}
\usepackage{multirow}
\usepackage{graphicx}
\usepackage{wrapfig}
\usepackage{caption}
\usepackage{amssymb}
\usepackage{threeparttable} 
\usepackage{enumitem}
\usepackage{caption}
\usepackage{soul}
\usepackage[normalem]{ulem}
\usepackage{pifont}
\usepackage{tikz}
\usepackage[capitalize]{cleveref}
\usepackage{algorithm}
\usepackage{algpseudocode}
\usepackage{comment}
\usepackage[toc,page,header]{appendix}
\usepackage[loadshadowlibrary, color=blue!40]{todonotes}
\usepackage{minitoc}

\theoremstyle{plain}
\newtheorem{theorem}{Theorem}[section]
\newtheorem{proposition}[theorem]{Proposition}
\newtheorem{lemma}[theorem]{Lemma}
\newtheorem{corollary}[theorem]{Corollary}
\theoremstyle{definition}
\newtheorem{definition}[theorem]{Definition}

\theoremstyle{remark}


%

\title{ \name{}: Minimum Spanning Tree Regularization for Self-Supervised Learning}

\author{%
  Julie Mordacq\textsuperscript{1,2} \quad
  David Loiseaux\textsuperscript{1,2} \quad
  Vicky Kalogeiton\textsuperscript{2} \quad
  Steve Oudot\textsuperscript{1,2} \\
  \\
  \textsuperscript{1} Inria Saclay \quad
  \textsuperscript{2} LIX, CNRS, École Polytechnique, IP Paris \quad
}

\definecolor{lblue}{HTML}{cee5f2}
\definecolor{mblue}{HTML}{126782}

\definecolor{babyblue}{HTML}{3eade0}
\definecolor{lbluefigure}{HTML}{8ECAE6}
\definecolor{mbluefigure}{HTML}{219EBC}
\definecolor{dbluefigure}{HTML}{126782} 
\definecolor{yellowfigure}{HTML}{FFB703}

\definecolor{lgray}{HTML}{e9ecef}
\definecolor{customgray}{HTML}{A5B5BF}
\definecolor{verylightgray}{gray}{0.95}
\definecolor{up}{HTML}{2541b2}


\newcommand\julie[1]{\textcolor{teal}{Julie: #1}}
\newcommand\david[1]{\textcolor{blue}{David : #1}}
\newcommand\steve[1]{\textcolor{red}{#1}}

\newcommand\name[1]{\textsc{T-REGS}#1}
\newcommand\results[1]{\textcolor{mbluefigure}{#1}}

\newcommand\lossTREG[0]{\ensuremath{\mathcal{L}_\text{T-REG}}}
\newcommand\lossTREGS[0]{\ensuremath{\mathcal{L}_\text{T-REGS}}}
\newcommand\lossVarCov[0]{\ensuremath{\mathcal{L}_\text{var-cov}}}
\newcommand\lossS[0]{\ensuremath{\mathcal{L}_\text{S}}}
\newcommand\lossMST[0]{\ensuremath{\mathcal{L}_\text{E}}}
\newcommand\lossMSE[0]{\ensuremath{\mathcal{L}_\text{MSE}}}
\newcommand\lu[0]{\ensuremath{\mathcal{L}_u}}
\newcommand\wasser[0]{\ensuremath{\mathcal{W}_2}}
\newcommand\mst[0]{\ensuremath{\mathrm{MST}}}
\newcommand\mstx[1]{\ensuremath{\mst{}{\left(#1\right)}}}
\newcommand\costst[1]{\ensuremath{E\left(#1\right)}}
\newcommand\costmst[1]{\ensuremath{\costst{\mstx{#1}}}}
\renewcommand{\epsilon}{\varepsilon} 
\newcommand\dd[0]{\,\mathrm{d}}

\begin{document}
\maketitle

\doparttoc 
\faketableofcontents 

\begin{abstract}
Self-supervised learning (SSL) has emerged as a powerful paradigm for learning representations without labeled data, often by enforcing invariance to input transformations such as rotations or blurring.
Recent studies have highlighted two pivotal properties for effective representations: \textit{(i)} avoiding \textit{dimensional collapse}-where the learned features occupy only a low-dimensional subspace, and \textit{(ii)} enhancing \textit{uniformity} of the induced distribution.
In this work, we introduce \name{}, a simple regularization framework for SSL based on the length of the Minimum Spanning Tree (\mst{}) over the learned representation.
We provide theoretical analysis demonstrating that \name{} simultaneously mitigates dimensional collapse and promotes distribution uniformity on arbitrary compact Riemannian manifolds.
Several experiments on synthetic data and on classical SSL benchmarks validate the effectiveness of our approach at enhancing representation quality.
Code is available \href{https://jumdc.github.io/tregs}{here}.
\end{abstract}

\section{Introduction}

Self-supervised learning (SSL) has emerged as a powerful paradigm for learning meaningful data representations without relying on human annotations. 
Recent advances, particularly in visual domains~\cite{bardes2022vicreg,he2022masked,darcet2025cluster,simeoni2025dinov3,venkataramanan2025franca}, have demonstrated that self-supervised representations can rival or even surpass those learned through supervised methods. 
%
A dominant approach in this field is \textit{joint embedding self-supervised learning} (JE-SSL)~\cite{chen2020simple,bardes2022vicreg,skean2024frossl}, where two networks are trained to produce similar embeddings for different views of the same image (see Figure~\ref{fig:overview}).
The fundamental challenge in JE-SSL is to prevent \textit{representation collapse}, where networks output identical and non-informative vectors regardless of the input.
To address this challenge, researchers have developed various strategies. \textit{Contrastive approaches}~\cite{chen2020simple,he2020momentum} encourage embeddings of different views of the same image to be similar while pushing away embeddings of different images. 
\textit{Non-contrastive methods} bypass the need of negative pairs, often employing \textit{asymmetric architectures}~\cite{chen2021exploring,grill2020bootstrap,caron2021emerging} or enforcing decorrelation among embeddings through \textit{redundancy reduction}~\cite{bardes2022vicreg,zbontar2021barlow,weng2024modulate}.

\begin{figure}[t]
    \centering
    \includegraphics[width=\linewidth]{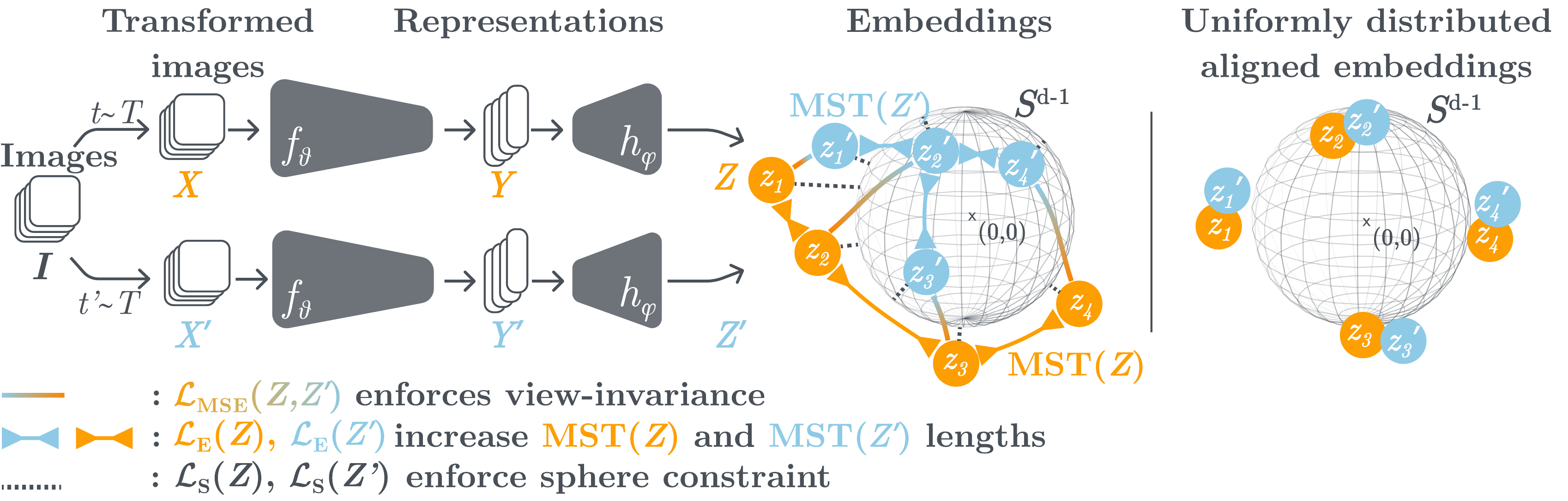}
    \captionsetup{font=small}
    \caption{\textbf{Overview of T-REGS.} 
    \textit{(Left)} Two augmented views $X, X'$ are encoded by $f_\theta$ and projected by $h_\phi$ into embeddings $Z, Z'$. Training jointly: \textit{(i)} minimizes the Mean Squared Error, $\mathcal{L}_\text{MSE}(Z,Z')$, to enforce view invariance (or alternatively the objective function of a given SSL method, $\mathcal{L}_\text{SSL}(Z,Z')$, when used as an auxiliary term); \textit{(ii)} maximizes the minimum-spanning-tree length on each branch, $\mathcal{L}_\mathrm{E}(Z)$ and $\mathcal{L}_\mathrm{E}(Z')$, repelling edge-connected points in $\mathrm{MST}(Z)$ and  $\mathrm{MST}(Z')$; and \textit{(iii)} applies sphere constraints $\mathcal{L}_\mathrm{S}(Z)$ and $\mathcal{L}_\mathrm{S}(Z')$.
    \textit{(Right)} As a result, T-REGS induces uniformly distributed embeddings without dimensional collapse.
    }
    \label{fig:overview}
\end{figure}

Recent studies have identified a more subtle form of collapse known as \textit{dimensional collapse}~\cite{hua2021feature,jing2022understanding,he2022exploring,li2022understanding}. This phenomenon occurs when the embeddings span only a lower-dimensional subspace of the representation space, leading to high feature correlations and reduced representational diversity. Such a collapse can significantly impair the model's ability to capture the full complexity of the data, limiting performance on downstream tasks~\cite{garrido2023duality}. 
Another crucial aspect of representation quality is \textit{uniformity}, which measures how evenly the embeddings are distributed across the representation space. It ensures that the learned representations preserve the maximum amount of information from the input data and avoid clustering in specific regions of the space.
This property is fundamental because it helps maintain the discriminative power of the representations, and allows for better generalization to downstream tasks~\cite{fangrethinking,wang2020understanding,saunshi2019theoretical,gao2021simcsee}.

While existing methods have made progress in mitigating dimensional collapse and enforcing uniformity, they present limitations: 
\textit{contrastive methods} are sensitive to the number of negative samples~\cite{garrido2023duality,bardes2022vicreg}, and require large batch sizes, which can be computationally expensive;
\textit{redundancy reduction} methods, which enforce the covariance matrix to be close to the identity matrix, only leverage the second moment of the data distribution and are blind to, e.g., concentration points of the density, which can prevent convergence to the uniform density (\Cref{fig:covariance_based_optim}); 
and \textit{asymmetric methods} lack theoretical grounding to explain how the asymmetric network helps prevent collapse~\cite{weng2024modulate}.

Given these limitations,  \citet{fangrethinking} suggested rethinking the notion of good SSL regularization, 
and proposed an Optimal Transport-based metric that satisfies a set of four principled properties (\emph{instance permutation}, \emph{instance cloning}, \emph{feature cloning}, and \emph{feature baby constraints}) that prevent dimensional collapse and promote sample uniformity. 
Their approach has its drawbacks: the optimal transport distances are costly to compute in general, and the proposed closed formula for accelerating the computation holds only on the sphere and requires square roots over SVD computations, which may lead to numerical instabilities.

To address these limitations, we propose T-REG, a novel regularization approach that is conceptually simple, easy to implement, and computationally efficient. 
T-REG naturally satisfies the four principled properties of~\citet{fangrethinking} (\cref{subsec:rethinking_props}) and provably prevents dimensional collapse while promoting sample uniformity (\Cref{fig:point_cloud_3d}).
These properties make T-REG suitable for joint-embedding self-supervised learning, where it can be applied independently to each branch, yielding T-REGS (see \Cref{fig:overview}).

The central idea of T-REG is to maximize the length of the minimum spanning tree (\mst{}) of the samples in the embedding space. It has strong theoretical connections to the line of work on statistical dimension estimation via entropy maximization~\cite{steele1988growth} (\Cref{subsec:theoretical_analysis}).
More explicitly, given a point cloud $Z$ in Euclidean space, a spanning tree (ST) of $Z$ is an undirected graph $G=(V,E)$ with vertex set  $V=Z$ and edge set~$E\subset V\times V$ such that $G$ is connected without cycle. 
We define the length of $G$ as:
\begin{equation}
    E(G) := \sum_{(z,z')\in E} \|z-z'\|_2.
    \label{eq:alpha_length}
\end{equation}
A minimum spanning tree of $Z$, denoted by $\mst{}(Z)$, is an ST of $Z$ that minimizes length~$E$; it is unique under a genericity condition on~$Z$.
Since the length of~$\mst{}(Z)$ scales under rescaling of~$Z$, maximizing it alone leads the points to diverge (see~\Cref{fig:point_cloud_3d_le}). 
%
To prevent trivial scaling, T-REG constrains embeddings to a compact manifold, encouraging full use of the representation dimension and a uniform distribution.

\begin{figure}[t]
    \begin{subfigure}{\textwidth}
    \begin{subfigure}{0.30\textwidth}
        \centering
        \includegraphics[width=0.9\linewidth]{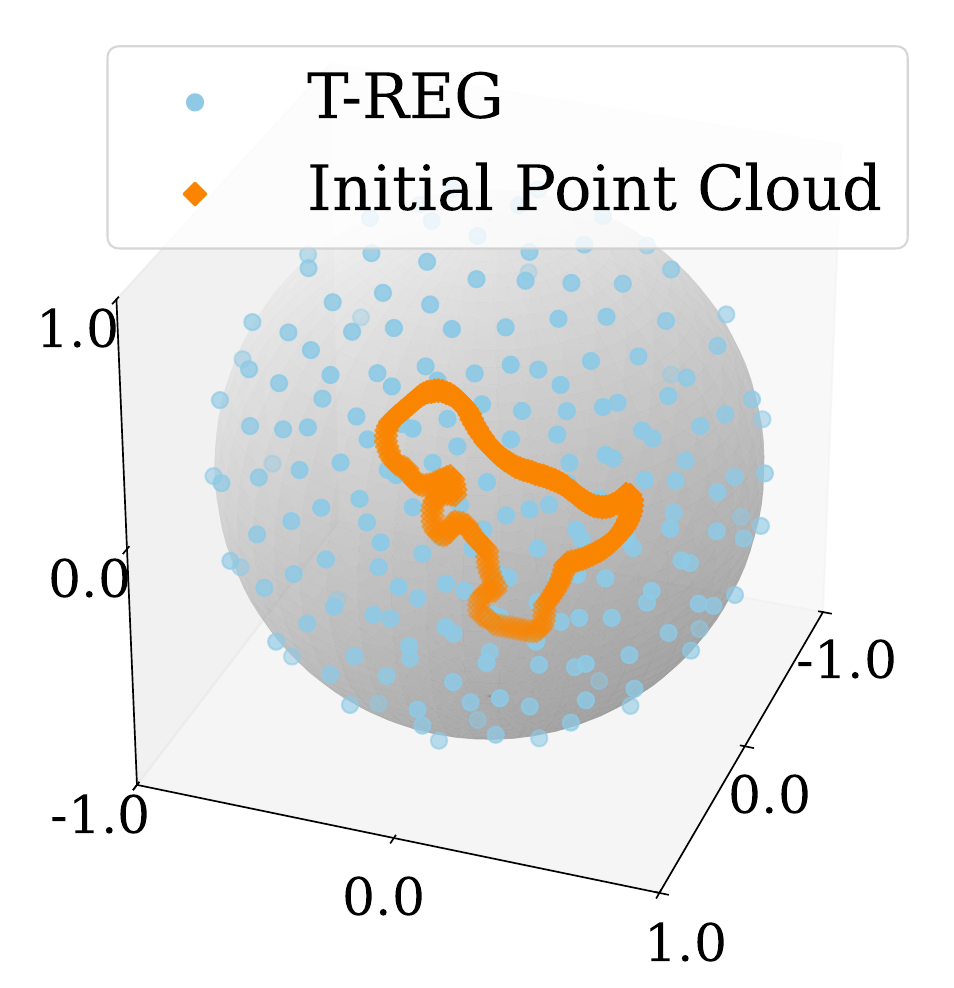}
        \caption{Optimization w/ T-REG}
        \label{fig:point_cloud_3d}
    \end{subfigure}
    \hfill    
    \begin{subfigure}{0.30\textwidth}
        \centering
        \includegraphics[width=0.9\linewidth]{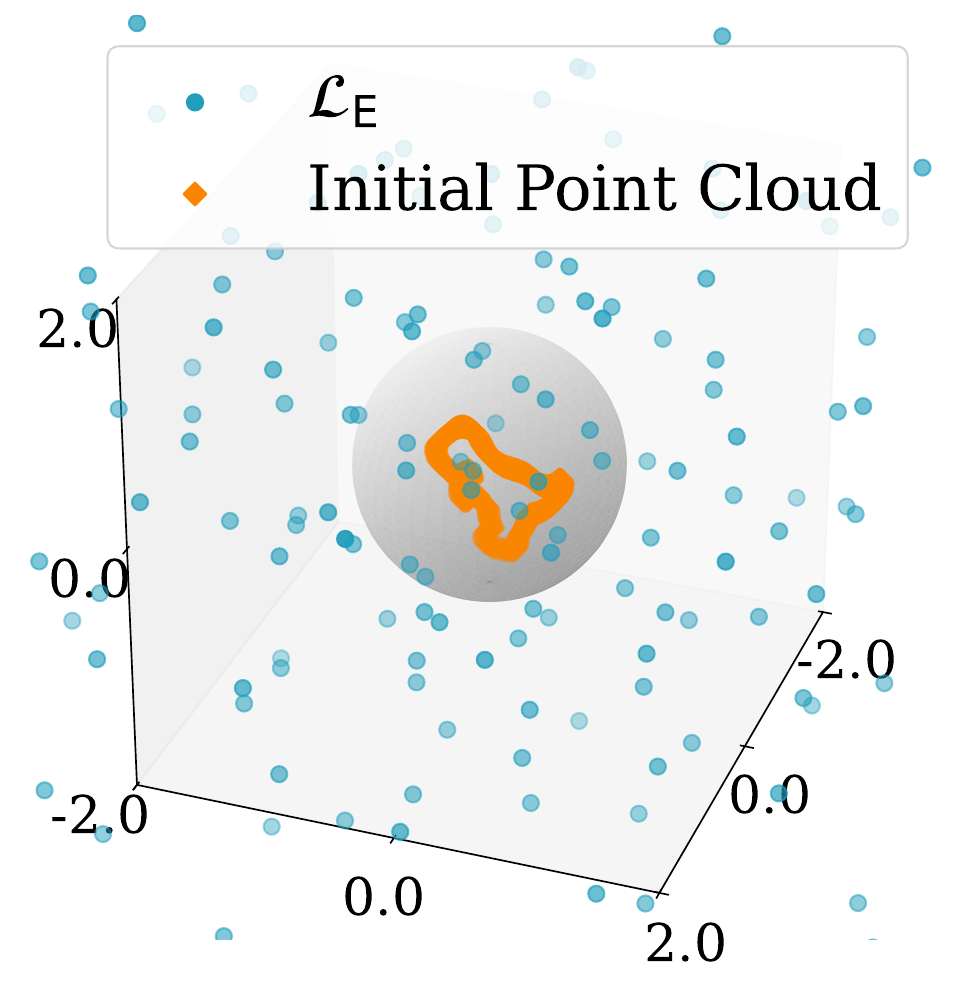}
        \caption{Optimization w/ $\mathcal{L}_\text{E}$}
        \label{fig:point_cloud_3d_le}
    \end{subfigure}
    \hfill
    \begin{subfigure}{0.30\textwidth}
        \centering
        \includegraphics[width=\linewidth]{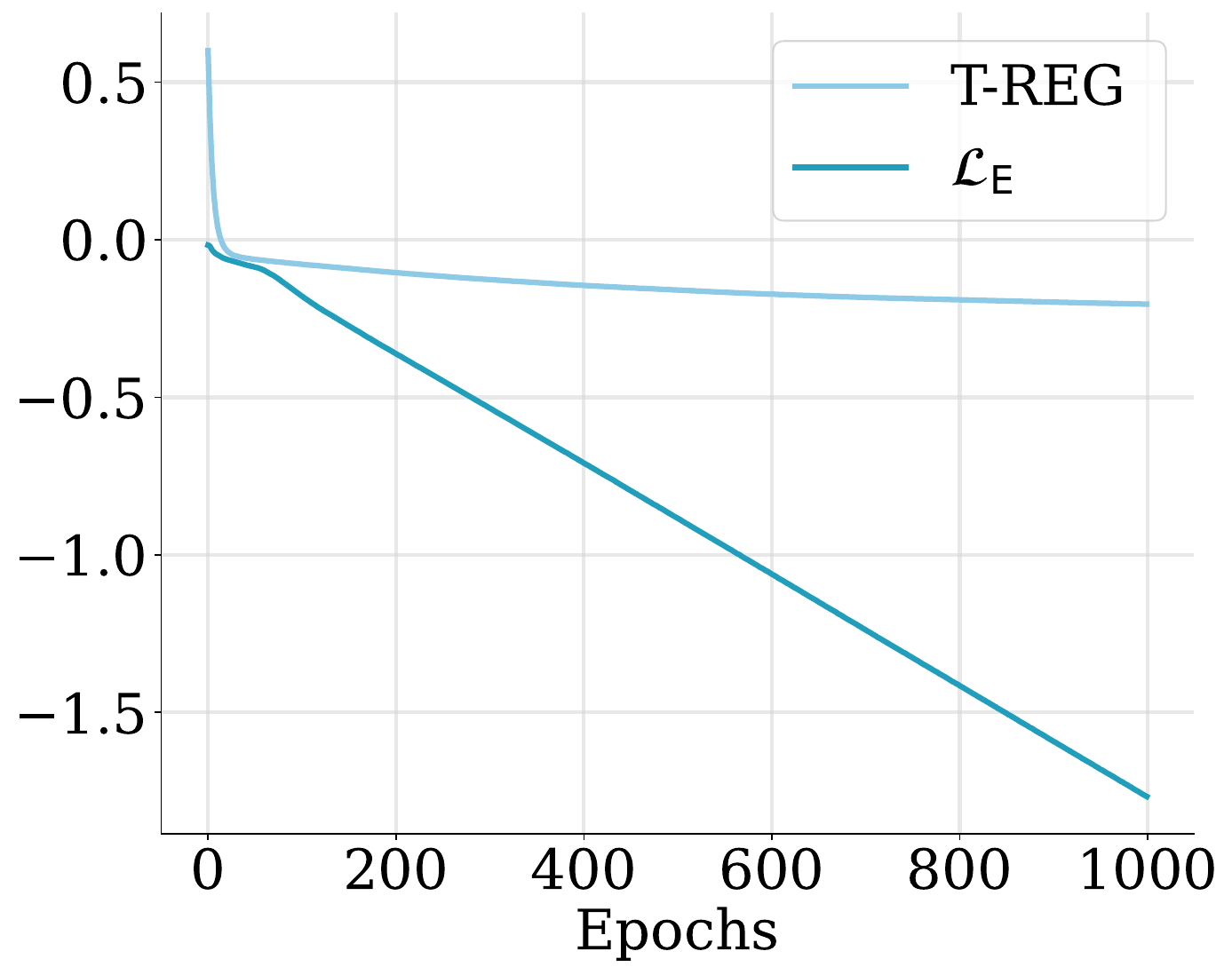}
        \caption{Losses}
        \label{fig:losses_3d}
    \end{subfigure}
    \subcaption*{\underline{$3$-d point cloud optimization}}
    \end{subfigure}
    \vspace{0.5cm}
     \begin{subfigure}{\textwidth}
         \begin{subfigure}{0.49\textwidth}
            \centering
            \includegraphics[width=0.9\linewidth]{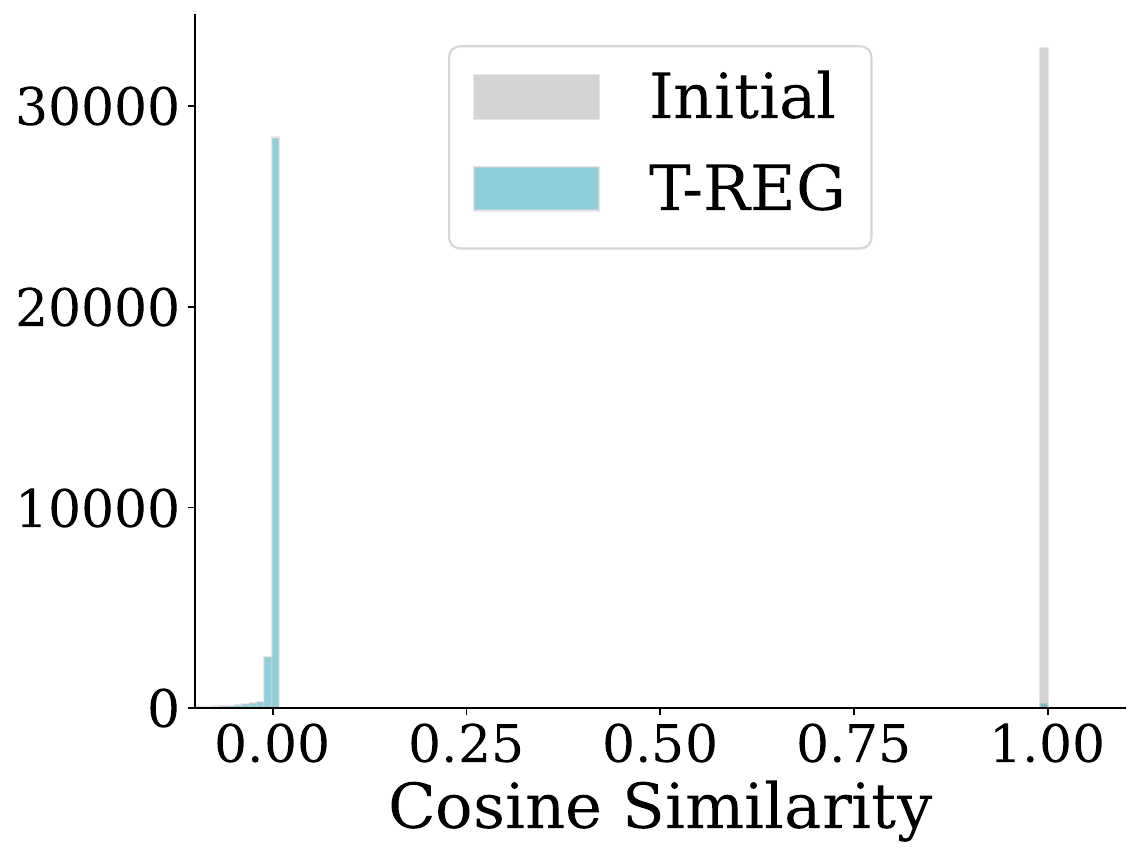}
            \caption{Optimization w/ T-REG}
            \label{fig:n_dim_point}
        \end{subfigure}
        \hfill
        \begin{subfigure}{0.49\textwidth}
            \centering
            \vfill
            \includegraphics[width=0.9\linewidth]{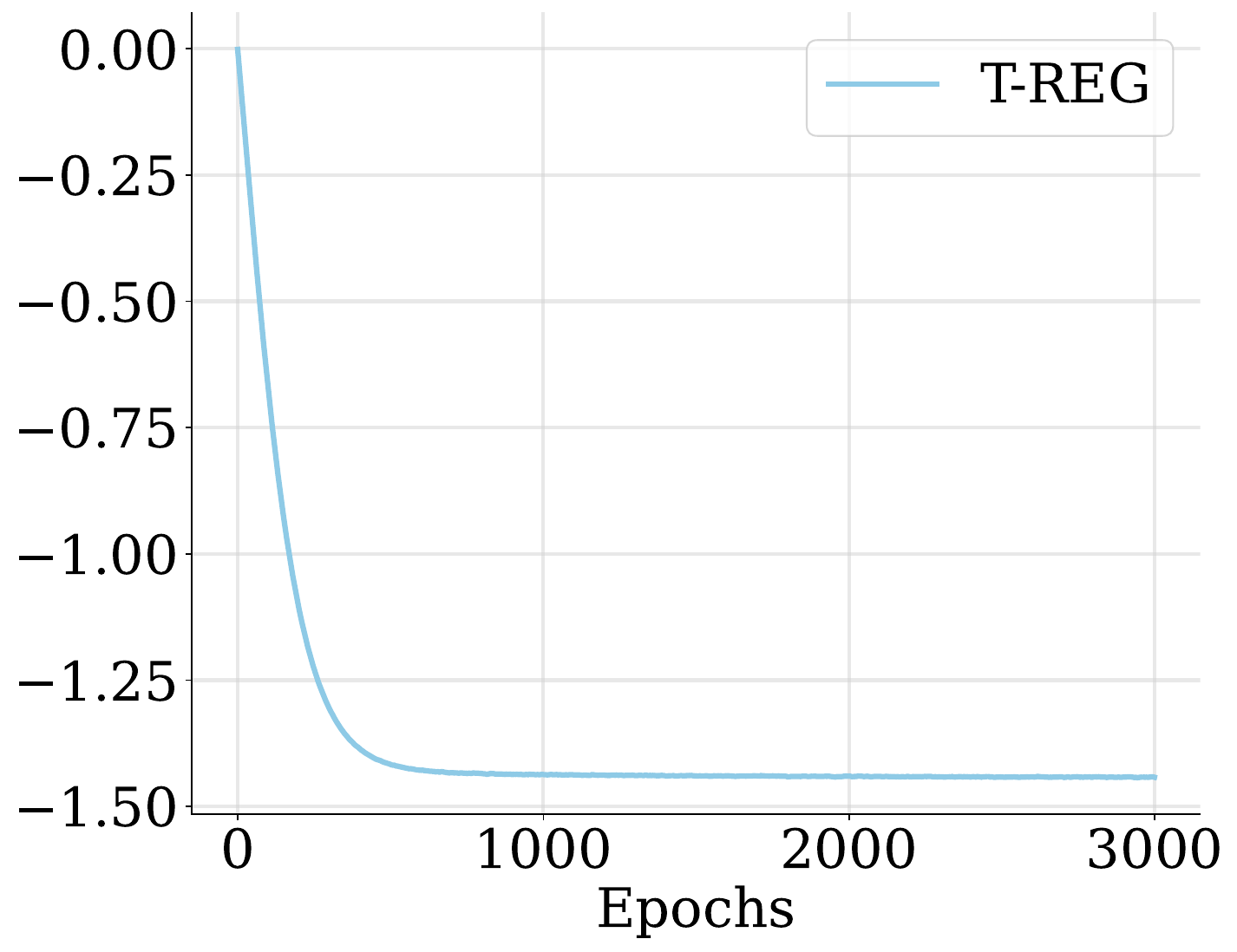}
            \caption{Loss}
            \label{fig:n_dim_point_losses}
        \end{subfigure}
        \captionsetup[sub]{font=small, labelfont={bf,sf}}
        \subcaption*{\underline{$256$-d point cloud optimization}}
    \end{subfigure}
    \vspace{-1cm}
    \captionsetup{font=small}
  \caption{
  \textbf{Illustration of T-REG with synthetic data.}
  \textit{(a-c) $3$-d point cloud analysis:}  (a) T-REG successfully spreads points uniformly on the sphere by combining \mst{} length maximization and sphere constraint, (b) using only MST length maximization leads to excessive dilation, 
  (c) stable convergence of T-REG whereas $\mathcal{L}_\text{E}$ alone fails to converge.
  \textit{(d-e) Higher-dimensional analysis ($256$-d):}  
  (d) T-REG enforces effective convergence to the 255-d regular simplex (\cref{prop:cv_to_splx}),
  (e) stable optimization behavior of T-REG.
  }
  \label{fig:point_cloud_optim}    
\end{figure}

Our main contributions can be summarized as follows:
\begin{enumerate}[label=\textit{\roman*)},itemsep=0pt, topsep=-0.1em]
    \item We introduce T-REG (\Cref{eq:TREG}), a regularization technique that maximizes the length of the minimum spanning tree (\mst{}) while constraining embeddings to lie on a hypersphere (\Cref{sec:T-REG}).
    \item We show both theoretically and empirically that T-REG naturally prevents dimensional collapse while enforcing sample uniformity (Sections~\ref{subsec:theoretical_analysis} and~\ref{subsec:empirical_analysis}).
     \item We apply T-REG to SSL either as standalone regularization, combining directly with view-invariance, or as an auxiliary loss to existing methods, yielding the T-REGS framework— whose effectiveness is evaluated through experiments on standard JE-SSL benchmarks (\Cref{sec:experiments}).
\end{enumerate}

\section{Related Work}
Our work builds upon recent advances in Joint-Embedding Self-Supervised Learning (JE-SSL), which can be broadly categorized into two main approaches: contrastive and non-contrastive methods~\cite{garrido2023duality}.

\underline{\textit{(i) Contrastive methods}}~\cite{hjelmlearning,misra2020self,chen2020simple,he2020momentum,chen2020improved,caron2020unsupervised} are commonly based on the InfoNCE loss~\cite{oord2018representation}. These methods encourage the embeddings of augmented views of the same image to be similar, while ensuring that embeddings from different images remain distinct. Contrastive pairs can either be sampled from a memory bank, as in MoCo~\cite{he2020momentum,chen2020improved}, or generated within the current batch, as in SimCLR~\cite{chen2020simple}.
Furthermore, \textit{clustering-based methods}~\cite{caron2018deep,caron2020unsupervised,ge2023soft} can also be seen as contrastive methods between prototypes, or clusters, instead of samples. SwAV~\cite{caron2020unsupervised}, for instance, learns online clusters using the Sinkhorn-Knopp transform.
However, despite their effectiveness, both approaches require numerous negative comparisons to work well, which can lead to high memory consumption. This limitation has spurred the exploration of alternative methods.

\textit{\underline{(ii) Non-contrastive methods}} bypass the reliance on explicit negative samples. 
\textit{Distillation-based methods} incorporate architectural strategies inspired by knowledge distillation to avoid representation collapse, such as an additional predictor~\cite{chen2021exploring}, self-distillation~\cite{caron2021emerging}, or a moving average branch as in BYOL~\cite{grill2020bootstrap}.
Meanwhile, \textit{redundancy reduction methods}~\cite{zbontar2021barlow,ermolov2021whitening,bardes2022vicreg,zhang2021zero,weng2024modulate,skean2024frossl,weng2022investigation} attempt to produce embedding variables that are decorrelated from each other, thus avoiding collapse. These methods can be broadly categorized into two groups: those that enforce soft whitening through regularization and those that perform hard whitening through explicit transformations.
BarlowTwins~\cite{zbontar2021barlow} and VICReg~\cite{bardes2022vicreg} regularize the off-diagonal terms of the covariance matrix of the embedding to have a covariance matrix that is close to the identity. 
%
W-MSE~\cite{ermolov2021whitening} transforms embeddings into the eigenspace of their covariance matrix (batch whitening) and enforces decorrelation among the resulting vectors. CW-REG~\cite{weng2022investigation} introduces channel whitening, while Zero-CL~\cite{zhang2021zero} combines both batch and channel whitening techniques. 
Building upon these methods, INTL~\cite{weng2024modulate} proposed to modulate the embedding spectrum and explore functions beyond whitening to prevent dimensional collapse. 
%

In this paper, we introduce a self-supervised learning (SSL) approach that uses a novel regularization criterion: the maximization of the embeddings' minimum spanning tree (\mst{}) length. Our approach aligns with the framework proposed by~\citet{fangrethinking}, satisfying the same four principled properties: \textit{instance permutation}, \textit{instance cloning}, \textit{feature cloning}, and \textit{feature baby constraints}.

\section{\mst{}  and dimension estimation}

\citet{steele1988growth} studies the total length of a minimal spanning tree (MST) for random subsets of Euclidean spaces. 
Let $X_n$ be an i.i.d.\footnote{Independent and identically distributed.} $n$-sample drawn from a probability measure $P_X$ with compact support on $\mathbb R^d$.
For $d \geq 2$, Theorem 1 of~\cite{steele1988growth} controls the growth rate of the length of $\mst{}(X_n)$ as follows:
\begin{equation}
    \costmst{X_n}\sim Cn^{(d-1)/d}  \text{ almost surely, as } n \to \infty,
\label{eq:steele}
\end{equation}
where $\sim$ denotes asymptotic convergence, and where $C$ is a constant depending only on $P_X$ and $d$. 


This asymptotic rate makes it possible to derive several estimators of the intrinsic dimension of the support of a measure of its samples~\cite{schweinhart2021persistent,adamsFractalDimensionMeasures2020,costaDeterminingIntrinsicDimension2006}. Among these estimators, the following one theoretically coincides with the usual dimension in non-singular cases, and it empirically coincides with the real-valued Hausdorff dimension~\cite{falconerFractalGeometryMathematical2013} in classical manifold examples or even fractal examples, such as the Cantor set or the Sierpiński triangle.

\begin{definition}
    Given a bounded metric space~$M$, the \mst{} dimension of $M$, denoted by $\mathrm{dim}_{\mst{}}(M)$, is the infimal exponent
    $d\in \mathbb{N}$ such that $\costmst{X}/|X|^{\frac {d-1} d}$ is uniformly bounded for all finite subsets $X \subseteq M$:
\begin{equation*}
 \mathrm{dim}_\mathrm{MST}(M) := \mathrm{inf}\{d : \exists C \text{ such that } \costst{\mathrm{MST}(X)}/|X|^{\frac {d-1} d} \leq C \, \textnormal{ for every finite subset }X \textnormal{ of }M\}.
\label{eq:dim_mst}
\end{equation*}
\end{definition}

\paragraph{Persistent Homology Dimension.} 
The \mst{} also appears in Topological Data Analysis (TDA)~\cite{oudot2017persistence}, where it relates to the \emph{total persistence in degree~$0$ of the Rips filtration}~\cite{oudot2017persistence}. 
%
%
Moreover, Persistent Homology (PH) has been used to define a family of fractal dimensions~\cite{adams2020fractal}, $\mathrm{dim}_\text{PH}^i(M)$, for each homological degree $i \geq 0$. In particular, for $i=0$ this coincides with the MST-based dimension, i.e., $\mathrm{dim}_\mathrm{PH}^0(M) = \mathrm{dim}_\mathrm{MST}(M)$.
The PH dimension can be derived from entropy computations and has already been used in several dimension-estimation applications~\cite{tan2024limitations,birdal2021intrinsic,dupuis2023generalization}.
In this work, we directly leverage the connection to the entropy to obtain uniformity properties on compact Riemannian manifolds (see \cref{subsec:entropy}).

%

\paragraph{\mst{} optimization.} 
TDA further provides a mathematical framework for optimizing the length of $\mst{}(X)$ with respect to the point positions of $X$~\cite{carriere2021optimizing,leygonie2022framework}. Within this framework, $\costmst{X}$ is differentiable almost everywhere, with derivatives given by the following simple formula:

\begin{equation}\label{eq:grads_costmst}
\forall x\in X,\quad 
\nabla_x\costmst{X} = 
\sum_{\substack{(x,z)\ \text{edge}\\\text{of}\ \mst{}(X)}} \nabla_x \left\|x-z\right\|_2
=
\sum_{\substack{(x,z)\ \text{edge}\\\text{of}\ \mst{}(X)}}\left\|x-z\right\|_2^{-1}\,(x-z).
\end{equation}
Furthermore, under standard assumptions on the learning rate, stochastic gradient descent is guaranteed to converge almost surely to critical points of the functional.   
In particular,~\Cref{eq:grads_costmst} shows that each pair of points forming an edge in the \mst{} exerts a repulsive force on the other during optimization.

\paragraph{\mst{} computation.} 
Given a finite point set~$X\subset\mathbb{R}^d$, several classic sequential procedures exist to compute $\mst{}(X)$, notably Kruskal's, Prim's, and Boruvka's algorithms, which all have at least quadratic running time in the size of~$X$. However, fast GPU-based parallelized implementations exist, for instance~\citep{fallinHighPerformanceMSTImplementation2023}, which unifies Kruskal's and Boruvka's algorithms and incorporates optimizations such as path compression and edge-centric operations. 




\section{T-REG: Minimum Spanning Tree based Regularization}
\label{sec:T-REG}

Our regularization T-REG has two terms: 
a length-maximization loss~$\mathcal{L}_{\text{E}}$ that decreases with the length of the minimum spanning tree, and
a soft sphere-constraint~$\mathcal{L}_{\text{S}}$ that increases with the distance to a fixed sphere~$\mathbb{S}$.
These two terms combined force the embeddings to lie on~$\mathbb{S}$ (or close to it), while spreading them out along~$\mathbb{S}$.

Formally, given $Z = \{z_1, ..., z_n\}\subseteq\mathbb{R}^d$, the \mst{} length maximization loss is defined as:
\begin{equation}
    \mathcal{L}_{\text{E}}(Z) = - \frac{1}{n} \,\costmst{Z},
\label{eq:l_e}
\end{equation}
where $\costmst{Z}$ denotes the length of the \mst{} of $Z$.
The soft sphere-constraint is given by:
\begin{equation}
    \mathcal{L}_\text{S}(Z) = \frac{1}{n} \sum_i (\Vert z_i\Vert_2 - 1)^2.
\label{eq:l8constraint}
\end{equation}
It penalizes points that move away from the unit sphere. Maximizing the MST length alone would cause the points to diverge to infinity; the sphere constraint prevents this by keeping the embeddings within a fixed region around $\mathbb{S}$ (\Cref{fig:point_cloud_3d_le,fig:losses_3d}).
The overall T-REG loss combines these two terms:
\begin{equation}\label{eq:TREG}
    \mathcal{L}_{\text{T-REG}}(Z) = \gamma\, \mathcal{L}_{\text{E}}(Z) + \lambda\,  \mathcal{L}_\text{S}(Z),
\end{equation}
where $\gamma$ and $\lambda$ are hyperparameters controlling the trade-off between spreading out the embeddings and maintaining them on the sphere.

The remainder of the section provides a theoretical analysis (\Cref{subsec:theoretical_analysis}) and empirical evaluation (\Cref{subsec:empirical_analysis}) of T-REG.

\subsection{Theoretical analysis}\label{subsec:theoretical_analysis}




\subsubsection{Behavior on small samples}

We begin by considering the case where $n\le d+1$. It is particularly relevant since, in SSL, batch sizes are often smaller than or comparable to the ambient dimension. 
{In order to account for the effect of the soft sphere constraint, we assume the points of~$X$ lie inside some fixed closed Euclidean $d$-ball~${B}$ of radius~$r$ centered at the origin (see below for the explanation).
\begin{theorem}\label{prop:cv_to_splx}
Under the above conditions, the maximum of~$\costmst{X}$ over the point sets~$X\subset{B}$ of fixed cardinality~$n$ is attained
  when the points of~$X$ lie  on the sphere~${S}=\partial{B}$, at the vertices of a regular $(n-1)$-simplex that has~${S}$ as its  smallest circumscribing sphere.
\end{theorem}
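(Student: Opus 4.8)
The plan is to pin down the maximum by proving a sharp closed-form upper bound and exhibiting the claimed simplex as a maximizer. Precisely, I will show that for every point set $X\subset{B}$ with $|X|=n$,
\[
  \costmst{X}\ \le\ r\sqrt{2n(n-1)},
\]
with equality exactly when the points of $X$ are the vertices of a regular $(n-1)$-simplex whose barycenter is the center of ${B}$ and whose circumradius equals $r$ --- equivalently, whose smallest circumscribing sphere is ${S}=\partial{B}$. (For $n=1$ the statement is vacuous, so assume $n\ge 2$.) Note this yields both halves of the theorem at once: the bound shows no configuration does better, and the equality case shows exactly that the inscribed regular simplex does attain it.

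First I would reduce the $\min$-over-trees quantity to a sum over pairs by an averaging trick. Writing $\costmst{X}=\min_{T}\costst{T}$, where $T$ ranges over the spanning trees of the complete graph on $X$ and $\costst{T}=\sum_{(z,z')\in T}\|z-z'\|_2$, a minimum is at most an average, so $\costmst{X}$ is bounded by the mean of $\costst{T}$ over all $n^{n-2}$ labeled spanning trees of $K_n$ (Cayley's formula). By symmetry each pair $\{z_i,z_j\}$ lies in the same number $2n^{\,n-3}$ of those trees, so this mean equals $\tfrac2n\sum_{i<j}\|z_i-z_j\|_2$. Now I bound the pairwise-distance sum inside the ball: by Cauchy--Schwarz, $\sum_{i<j}\|z_i-z_j\|_2\le\sqrt{\binom n2\sum_{i<j}\|z_i-z_j\|_2^{2}}$, and using the identity $\sum_{i<j}\|z_i-z_j\|_2^{2}=n\sum_i\|z_i-\bar z\|_2^{2}$ (with $\bar z$ the barycenter) together with $\sum_i\|z_i-\bar z\|_2^{2}\le\sum_i\|z_i\|_2^{2}\le nr^2$, we get $\sum_{i<j}\|z_i-z_j\|_2\le nr\sqrt{\binom n2}$. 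Combining, $\costmst{X}\le \tfrac2n\cdot nr\sqrt{\binom n2}=r\sqrt{2n(n-1)}$.

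Next I would track the equality cases and check attainment. Equality in the chain forces equality in Cauchy--Schwarz, i.e.\ all $\binom n2$ pairwise distances are equal to some common value $\ell$, and it forces $\sum_i\|z_i-\bar z\|_2^{2}=\sum_i\|z_i\|_2^{2}=nr^2$, i.e.\ $\bar z=0$ and $\|z_i\|_2=r$ for every $i$. Since $n\le d+1$, any $n$ mutually equidistant (hence distinct, $\ell>0$) points of $\mathbb R^d$ are affinely independent and so form a regular $(n-1)$-simplex; the conditions $\bar z=0$ and $\|z_i\|_2=r$ say precisely that its circumsphere --- which is its smallest enclosing sphere --- is ${S}$. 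Conversely, for such a simplex the relations $\sum_i z_i=0$ and $\|z_i\|_2=r$ give $\langle z_i,z_j\rangle=r^2-\ell^2/2$ for $i\ne j$ and $0=\|\sum_i z_i\|_2^{2}=nr^2+n(n-1)(r^2-\ell^2/2)$, hence $\ell^2=2nr^2/(n-1)$; since all edge lengths coincide, every spanning tree --- in particular $\mstx{X}$ --- has length $(n-1)\ell=r\sqrt{2n(n-1)}$, so the bound is attained exactly as claimed.

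I expect the only genuinely non-mechanical part to be the equality analysis in the last paragraph: confirming that ``mutually equidistant $+$ barycenter at the center $+$ lying on $S$'' characterizes precisely the regular simplex circumscribed by $S$, and that $n\le d+1$ is exactly the hypothesis that makes this configuration realizable in $\mathbb R^d$ while keeping the $n$ points affinely independent. The rest is just Cayley's formula, a symmetry count, Cauchy--Schwarz, and the standard variance identity for pairwise distances.
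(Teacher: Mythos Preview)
Your argument is correct, and its overall skeleton coincides with the paper's: bound $\costmst{X}$ by $\tfrac{2}{n}\sum_{i<j}\|z_i-z_j\|_2$, bound that pairwise sum inside the ball, and then check that the inscribed regular simplex attains the resulting value $r\sqrt{2n(n-1)}$. The differences are in how each of those two steps is justified. For the first inequality the paper proves the lemma by an inductive argument on $n$ that splits the MST at a longest edge, whereas your Cayley-averaging argument (``minimum $\le$ mean over all $n^{n-2}$ spanning trees'') gives the same $\tfrac{2}{n}$ constant in one line. For the second inequality the paper invokes a known convex-geometry result of Apostol saying that the sum of pairwise distances among $d{+}1$ points in a ball is maximized at the regular $d$-simplex, and then handles $n<d+1$ in a separate appendix argument by restricting to an $n$-dimensional affine slice; your route via Cauchy--Schwarz, the variance identity $\sum_{i<j}\|z_i-z_j\|_2^{2}=n\sum_i\|z_i-\bar z\|_2^{2}$, and $\sum_i\|z_i\|_2^{2}\le nr^2$ is self-contained, treats all $n\le d+1$ uniformly, and moreover delivers the equality characterization (equidistant, barycenter at the origin, all on $S$) for free. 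What the paper's route buys is a cleaner appeal to an established extremal fact; what yours buys is a shorter, citation-free proof with a built-in uniqueness analysis.
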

}

{Recall that a \emph{$k$-simplex} is the convex hull of a set of $k+1$ points that are affinely independent in~$\mathbb{R}^d$---which is possible only for $k\leq d$. The simplex is {\em regular} if all its edges have the same length, i.e., all the pairwise distances between its vertices are equal. In such a case, we have the following relation between its edge length~$a$ and the radius~$r$ of its smallest circumscribing sphere:
  \begin{equation}\label{eq:reg_simplex_a_vs_r}
    a = r \sqrt{\frac{2(k+1)}{k}}.
  \end{equation}
}

{\cref{prop:cv_to_splx} explains the behavior of T-REG as follows: first, minimizing the term~$\mathcal{L}_{\text{E}}$ in \Cref{eq:TREG} expands the point cloud until the sphere constraint term~$\mathcal{L}_{\text{S}}$ becomes the dominating term (which happens eventually since $\mathcal{L}_{\text{S}}$ grows quadratically with the scaling factor, versus linearly for~$\mathcal{L}_{\text{E}}$); at that stage, the points stop expanding and start spreading themselves out uniformly along the sphere of directions. The amount of expansion before spreading is prescribed by the strength of the sphere constraint term versus the term in the loss, which is driven by the ratio between their respective mixing parameters $\lambda$ and $\gamma$.}

{The proof of~\cref{prop:cv_to_splx} relies on the following two ingredients: a standard result in convex geometry (\cref{prop:max_total_dist}), and a technical lemma---proved in Appendix~\ref{sec:missing-proofs}---relating the length of the MST to the sum of pairwise distances (\cref{lem:MST_vs_total_dist}).
\begin{proposition}[Eq.~(14.25) in~\citet{apostol2017new}]\label{prop:max_total_dist} 
	Under the conditions of \cref{prop:cv_to_splx}, and
	assuming $n=d+1$, the sum of pairwise distances $\sum_{1\leq i<j\leq n}
	\left\|z_i-z_j\right\|_2$ is maximal when the points of~$X$ lie on the
	bounding sphere~${S}$, at the vertices of a regular $d$-simplex.
\end{proposition}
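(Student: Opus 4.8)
The plan is a short variational reduction followed by a Jensen-type estimate engineered to be tight exactly at the regular simplex. A maximizer exists by compactness of~$B$ and continuity of the objective. For the upper bound, fix any $X=\{z_1,\dots,z_n\}\subset B$; since $\|z_i\|_2\le r$, every squared edge satisfies
\[
  \|z_i-z_j\|_2^2=\|z_i\|_2^2+\|z_j\|_2^2-2\langle z_i,z_j\rangle\ \le\ 2r^2-2\langle z_i,z_j\rangle ,
\]
and the right-hand side, being $\ge\|z_i-z_j\|_2^2\ge0$, is nonnegative. (Alternatively one may observe that $(z_1,\dots,z_n)\mapsto\sum_{i<j}\|z_i-z_j\|_2$ is a sum of norms of linear maps, hence convex on the product of $n$ copies of~$B$, so its maximum is attained with every $z_i$ on $S=\partial B$, where this bound holds with equality.)

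Next I would sum over all pairs, take square roots, and apply Jensen's inequality to the concave function $\sqrt{\,\cdot\,}$:
\[
  \sum_{1\le i<j\le n}\|z_i-z_j\|_2\ \le\ \binom n2\sqrt{\,2r^2-\tfrac{2}{\binom n2}\sum_{1\le i<j\le n}\langle z_i,z_j\rangle\,}.
\]
The inner-product sum is controlled by the centroid identity $\sum_{i<j}\langle z_i,z_j\rangle=\tfrac12\big(\|\sum_i z_i\|_2^2-\sum_i\|z_i\|_2^2\big)\ge-\tfrac{n r^2}{2}$, valid because $\sum_i\|z_i\|_2^2\le nr^2$ and $\|\sum_i z_i\|_2^2\ge0$. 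Substituting and simplifying (using $\binom n2=\tfrac{n(n-1)}2$) collapses the bound to $\binom n2\, r\sqrt{2n/(n-1)}$, which by~\cref{eq:reg_simplex_a_vs_r} with $k=d=n-1$ is exactly $\binom n2 a$ --- the sum of the $\binom n2$ (all equal) pairwise distances of a regular $d$-simplex whose circumscribing sphere is~$S$.

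The remaining work is to pin down the equality case, which is the only delicate point. Equality throughout forces (i)~$\|z_i\|_2=r$ for every~$i$ (tightness of the first bound over all pairs), (ii)~all $\langle z_i,z_j\rangle$ equal for $i\neq j$, hence all pairwise distances equal (tightness of Jensen), and (iii)~$\sum_i z_i=0$ (tightness of the centroid bound). Together these say precisely that $X$ is the vertex set of a regular $d$-simplex with all vertices on~$S$ and centroid at the center of~$S$; such configurations exist in~$\mathbb R^d$ exactly because $d=n-1$, and for a regular simplex the circumscribing sphere is its smallest enclosing sphere. Hence the maximum equals $\binom n2 a$ and is attained exactly at these configurations. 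I do not anticipate a deep obstacle; the one subtlety is arranging the three inequalities so their equality cases are mutually consistent and jointly force the regular simplex --- a brute-force Lagrange-multiplier attack on the constrained maximization would be far messier and harder to push to the global optimum.
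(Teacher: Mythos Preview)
The paper does not actually prove \cref{prop:max_total_dist}; it quotes it as a known result from convex geometry (Eq.~(14.25) in \citet{apostol2017new}) and uses it as a black box in the proof of \cref{prop:cv_to_splx}. So there is no ``paper's proof'' to compare your argument against.

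That said, your proof is correct and self-contained. The chain of three inequalities --- the norm bound $\|z_i-z_j\|_2^2\le 2r^2-2\langle z_i,z_j\rangle$, Jensen for $\sqrt{\cdot}$, and the centroid identity $\sum_{i<j}\langle z_i,z_j\rangle\ge -nr^2/2$ --- does collapse to $\binom{n}{2}\,r\sqrt{2n/(n-1)}$, matching \cref{eq:reg_simplex_a_vs_r} with $k=d=n-1$. Your equality analysis is also clean: tightness of the centroid bound already forces $\sum_i\|z_i\|_2^2=nr^2$ (hence every $\|z_i\|_2=r$) and $\sum_i z_i=0$, while Jensen's equality case forces all $\langle z_i,z_j\rangle$ to coincide; together these characterize the regular $d$-simplex inscribed in~$S$. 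The parenthetical convexity remark (the objective is convex in each $z_i$, so maxima sit on $\partial B$) is a nice alternative way to reduce to the sphere, though it is not needed once you rely on the centroid bound for the equality case. In short, you have supplied an elementary proof of a result the paper merely cites.
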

\begin{lemma}\label{lem:MST_vs_total_dist}
  For any points $z_1, \dots, z_n\in\mathbb{R}^d$:
  \[ E\left(\mst\left(\{z_1, \dots, z_n\}\right)\right) \leq \frac{2}{n}\,\sum_{1\leq i<j\leq n} \left\|z_i-z_j\right\|_2. \]

\end{lemma}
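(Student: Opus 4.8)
The plan is to bound the MST length by the expected length of a random spanning tree, and then bound that in turn by the sum of all pairwise distances. The key observation is that the MST is, by definition, the minimum-length spanning tree, so its length is at most the average length of any family of spanning trees. I will choose a convenient family indexed by the vertices: for each $k \in \{1, \dots, n\}$, let $T_k$ be the star graph centered at $z_k$, i.e., the spanning tree with edge set $\{(z_k, z_i) : i \neq k\}$. Each $T_k$ is indeed a spanning tree (connected, acyclic, $n-1$ edges), so $E(\mst(\{z_1,\dots,z_n\})) \le \min_k E(T_k) \le \frac{1}{n}\sum_{k=1}^n E(T_k)$.

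The next step is a direct computation of $\sum_{k=1}^n E(T_k)$. We have $E(T_k) = \sum_{i \neq k} \|z_k - z_i\|_2$, so summing over $k$ counts each unordered pair $\{i,j\}$ exactly twice (once from $T_i$ and once from $T_j$):
\[
\sum_{k=1}^n E(T_k) = \sum_{k=1}^n \sum_{i \neq k} \|z_k - z_i\|_2 = 2 \sum_{1 \le i < j \le n} \|z_i - z_j\|_2.
\]
Combining the two displays gives $E(\mst(\{z_1,\dots,z_n\})) \le \frac{2}{n} \sum_{1 \le i < j \le n} \|z_i - z_j\|_2$, which is exactly the claimed inequality. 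A minor edge case to note is $n = 1$, where both sides are $0$; for $n \ge 2$ the argument above applies verbatim.

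There is essentially no hard part here: the only thing to get right is the combinatorial bookkeeping in the double sum (the factor of $2$ from counting each pair twice) and the observation that a star is a valid spanning tree. The slight subtlety worth spelling out is that we do not need the $T_k$ to be the ``best'' trees — averaging over all of them suffices, and the averaging is what produces the $1/n$ factor that makes the bound strong enough to feed into the proof of \cref{prop:cv_to_splx} (where it is combined with \cref{prop:max_total_dist} to pin down the regular simplex as the maximizer).
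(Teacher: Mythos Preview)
Your argument is correct and considerably shorter than the paper's. The paper proceeds by induction on~$n$: it removes a longest edge $e=(z_i,z_{i+1})$ of the MST, splits the tree into components $C$ and $D$, argues that each is the MST of its own vertex set, applies the induction hypothesis to each, and then uses the cut property (every edge crossing the $C$/$D$ partition has length at least $\|z_i-z_{i+1}\|_2$) together with some algebraic bookkeeping to reassemble the bound. Your averaging-over-stars argument bypasses all of this: it uses only the definitional minimality of the MST and the trivial fact that $\min\leq\text{average}$, with the factor~$2/n$ falling out of the double count. The paper's approach does expose more structure of the MST (cut property, subtree optimality), which might be pedagogically useful in context, but for the bare inequality your proof is the cleaner one.
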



}

\begin{proof}[{Proof of \cref{prop:cv_to_splx}}]
{
We prove the result in the case $n=d+1$. The case $n<d+1$ is the same modulo some extra technicalities and can be found in Appendix~\ref{sec:missing-proofs}.
}

{
Let $z^*_1, \dots, z^*_n\in\mathbb{S}$ lie at the vertices of a regular $d$-simplex. Then, for any points $z_1, \dots, z_n \in B$:
  \[ \begin{array}{rcl}
    E\left(\mst\left(\left\{z_1, \dots, z_n\right\}\right)\right)
    & \stackrel{\text{\tiny{\cref{lem:MST_vs_total_dist}}}}{\leq}
    & \frac{2}{n}\,\sum_{1\leq i<j\leq n} \left\|z_i-z_j\right\|_2\\[1em]
    & \stackrel{\text{\tiny{\cref{prop:max_total_dist}}}}{\leq}
    & \frac{2}{n}\,\sum_{1\leq i<j\leq n} \left\|z^*_i-z^*_j\right\|_2\\[1em]
    & \stackrel{\text{\tiny{Eq.~\eqref{eq:reg_simplex_a_vs_r}}}}{=}
    & \frac{2}{n} \frac{n(n-1)}{2}\, r\sqrt{\frac{2(d+1)}{d}}  = (n-1)\, r\sqrt{\frac{2(d+1)}{d}} \\[1em]
    &=& E\left(\mst\left(\left\{z^*_1, \dots, z^*_n\right\}\right)\right).
  \end{array} \]
}
\end{proof}

\subsubsection{Asymptotic behavior on large samples}\label{subsec:entropy}
We now consider the case where $n> d+1$, focusing specifically on the asymptotic behavior as $n\to\infty$. We analyze the constant~$C$ in~\Cref{eq:steele}, which can be made independent of the density of the sampling~$X$.
This, in particular, allows us to show that uniform and dimension-maximizing
densities are asymptotically optimal for $E(\mathrm{MST}(\cdot))$.
We fix a compact Riemannian $d$-manifold, $\mathcal{M}$, equipped
with the $d$-dimensional Hausdorff measure $\mu$.

\begin{theorem}[{\cite[Corollary 5]{costaDeterminingIntrinsicDimension2006}}]\label{thm:steelecosta}
	Let $X_n$ be an iid $n$-sample of a probability measure on~$\mathcal{M}$ with density
	$f_X$ w.r.t. $\mu$.
	Then, there exists a constant $C'$ independent of $f_X$ and of $\mathcal M$ such that:
	\begin{equation}
		n^{-\frac {d-1}{d}} \cdot \costmst{X_n} \xrightarrow[n\to \infty]{}{C' \int f_X^{\frac{d-1}{d}}}\dd \mu
		\quad
		\textnormal{almost surely.}
		\label{eq:steelecosta}
	\end{equation}
\end{theorem}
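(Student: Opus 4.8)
The plan is to upgrade Steele's limit~\eqref{eq:steele} in two steps: first, make the density-dependence of the constant explicit in the Euclidean setting; then, patch this local statement onto $\mathcal{M}$ through coordinate charts, at which point the surviving constant is seen to depend only on $d$. Throughout, write $L(Z):=\costmst{Z}$. In $\mathbb{R}^d$, $L$ is translation invariant, $1$-homogeneous ($L(aZ)=aL(Z)$) and, when a cube is cut into congruent subcubes, both subadditive and superadditive up to boundary corrections --- the crucial deterministic input being that, in any bounded region, the number of edges of $\mst$ of length exceeding $t$ is $O(t^{-d})$ (a packing bound). These are precisely the hypotheses of the subadditive-Euclidean-functional machinery of Beardwood--Halton--Hammersley and \citet{steele1988growth}, which yields: for $U_n$ uniform on $[0,1]^d$, $n^{-(d-1)/d}L(U_n)\to C_d$ almost surely, with $C_d:=\lim_n n^{-(d-1)/d}\,\mathbb{E}[L(U_n)]$ existing and depending only on $d$; and, for an i.i.d.\ $n$-sample $X_n$ from a compactly supported Lebesgue density $g$ on $\mathbb{R}^d$, $n^{-(d-1)/d}L(X_n)\to C_d\int g^{(d-1)/d}\dd x$ almost surely. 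The last convergence is a blocking argument: partition the support into cubes $Q$ of side $\ell_n\to0$ with $\ell_n\gg n^{-1/d}$; on $Q$, $g\approx g(x_Q)$, so the restricted MST has expected length $\approx C_d\,(n\,g(x_Q)\,\ell_n^{d})^{(d-1)/d}\ell_n=C_d\,n^{(d-1)/d}g(x_Q)^{(d-1)/d}\ell_n^{d}$, summing over $Q$ produces a Riemann sum for $C_d n^{(d-1)/d}\int g^{(d-1)/d}$, and rejoining the $O(\ell_n^{-d})$ subtrees into one spanning tree costs $O(\ell_n^{-(d-1)})=o(n^{(d-1)/d})$; almost-sure convergence follows from Azuma/Steele concentration and Borel--Cantelli.

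Second, I would localize on $\mathcal{M}$ (following \citet{costaDeterminingIntrinsicDimension2006}). Cover the compact manifold by finitely many charts $(U_j,\phi_j)$ and fix a subordinate partition; on $U_j$ the $d$-dimensional Hausdorff measure reads $\dd\mu=\sqrt{\det g(x)}\,\dd x=:w(x)\,\dd x$, so in coordinates the sample has Lebesgue density $g_X:=(f_X\circ\phi_j^{-1})\,w$. Refine each chart into pieces of vanishing diameter; on a piece around $x_0$ the metric tensor is within a $(1+\epsilon)$ factor of the constant inner product $g(x_0)=A^{\!\top}A$, and the geodesic distance is $(1+\epsilon)$-comparable to $d_A(x,x'):=\|A(x-x')\|_2$ (the discrepancy between the geodesic and the chart/ambient distance at the MST scale $\Theta(n^{-1/d})$ is of higher order and hence negligible, and a uniform $(1+\epsilon)$ metric distortion changes $L$ by a $(1+\epsilon)$ factor by $1$-homogeneity). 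Since $x\mapsto Ax$ carries the $d_A$-MST to a Euclidean MST of the pushforward sample, the first step applies on the piece, and, using $|\det A|=\sqrt{\det g(x_0)}=w$ and the substitution $y=Ax$, its contribution to $n^{-(d-1)/d}L(X_n)$ tends to
\[ C_d\!\int\bigl(g_X/w\bigr)^{\frac{d-1}{d}}\,w\,\dd x \;=\; C_d\!\int f_X^{\frac{d-1}{d}}\,\dd\mu, \]
because $g_X=f_Xw$ and $\tfrac{d-1}{d}+\tfrac1d=1$, so the powers of $w$ recombine to $w\,\dd x=\dd\mu$. Summing over the finite cover --- again reconnecting the local MSTs at lower-order cost --- gives $n^{-(d-1)/d}\costmst{X_n}\to C_d\int_{\mathcal{M}}f_X^{(d-1)/d}\dd\mu$ almost surely.

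This establishes the claim with $C'=C_d$: the manifold and the density enter only through the integral $\int f_X^{(d-1)/d}\dd\mu$, so $C'$ depends solely on the fixed dimension $d$. I expect the main obstacle to be precisely this localization step --- making the nested approximations (chart distortion; ``$f_X$ nearly constant'' on a cube; ``$g$ nearly constant and Euclidean'' on a piece; ``geodesic $\approx$ chord at scale $n^{-1/d}$'') uniformly small over $\mathcal{M}$ at the same time, and controlling the repeated re-gluing of the many local spanning trees into a single global one by $o(n^{(d-1)/d})$. The deterministic ``at most $O(t^{-d})$ MST edges longer than $t$'' estimate, together with concentration to upgrade expectation bounds to almost-sure ones, is what makes all these error terms absorbable; a minor further remark is that the limit is insensitive to whether $L$ uses the extrinsic Euclidean or the intrinsic geodesic distance, since MST edges have vanishing length and the two distances agree to second order.
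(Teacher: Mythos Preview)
The paper does not prove this theorem: it is stated as a citation from \cite[Corollary~5]{costaDeterminingIntrinsicDimension2006} and is used as a black box (the paper only works with its consequences, \cref{prop:unif_optim} and \cref{cor:uniform_distrib_max}). So there is no ``paper's own proof'' to compare against.

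That said, your sketch is essentially the argument of the cited reference: Steele's subadditive-Euclidean-functional theory gives the density formula $C_d\int g^{(d-1)/d}\dd x$ on $\mathbb{R}^d$, and the passage to $\mathcal{M}$ proceeds by chart localization, where the Jacobian $\sqrt{\det g}$ of the volume form exactly cancels in the exponent $(d-1)/d + 1/d = 1$, leaving $C_d\int f_X^{(d-1)/d}\dd\mu$ with a constant depending only on $d$. Your computation of this cancellation is correct, and you correctly flag the genuine technical work: making the several approximations (metric distortion in charts, near-constancy of $f_X$ on small pieces, chord vs.\ geodesic at scale $n^{-1/d}$) uniform and simultaneous, and keeping the re-gluing cost $o(n^{(d-1)/d})$. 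These are handled in \cite{costaDeterminingIntrinsicDimension2006} via the quasi-additive framework and the long-edge packing bound you mention; your sketch is a faithful outline of that proof.
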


As pointed out by \citet{costaDeterminingIntrinsicDimension2006}, 
the limit in \Cref{eq:steelecosta} is  related to the \emph{intrinsic Rényi $\frac{d-1}{d}$-entropy}:
\begin{equation}
    \varphi_{\frac{d-1}{d}}(f)  = \frac{1}{1-\frac{d-1}{d}} \log \int f^{\frac{d-1}{d}}\dd\mu,
\end{equation}
which is known to converge to the Shannon entropy as $\frac{d-1}{d}\to
1$ \citep{bromileyShannonEntropyRenyi2004}. The Shannon entropy, in turn, achieves its maximum at the
uniform distribution on compact sets~\cite{parkMaximumEntropyAutoregressive2009}.
This result can be shown by directly studying the map $\phi \colon f \mapsto \int f^p\dd \mu$, which shows that an optimal density function $f_X$ maximizes the dimensionality of the sampling $X_n$. 
Given a compact set $K\subseteq \mathcal{M}$, we consider the space~$\mathcal D_K$ of positive, continuous probability densities $f$ on $K$.
\begin{proposition}\label{prop:unif_optim}
    For any $0<p<1$ and any compact set $K\subseteq \mathcal M$, the map
    $\phi|_{{\mathcal D}_K}$ admits a unique
    maximum at the uniform {distribution} $U_K$ on $K$. Furthermore, we have $\phi(U_A) < \phi(U_B)$ for all sets
    $A,B\subseteq M$ such that $\mu(A)<\mu(B)$.
\end{proposition}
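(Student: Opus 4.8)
The plan is to reduce both assertions to a single elementary identity together with one application of Jensen's inequality, working throughout with the natural extension of $\phi$ to nonnegative measurable functions, $\phi(f)=\int f^p\dd\mu$. The identity is: for any measurable $A$ with $0<\mu(A)<\infty$, the uniform density $U_A=\mathbf 1_A/\mu(A)$ satisfies $\phi(U_A)=\int_A\mu(A)^{-p}\dd\mu=\mu(A)^{1-p}$. Granting this, the ``furthermore'' claim is immediate: since $0<p<1$ the exponent $1-p$ is positive, so $t\mapsto t^{1-p}$ is strictly increasing on $(0,\infty)$, whence $\mu(A)<\mu(B)$ forces $\phi(U_A)=\mu(A)^{1-p}<\mu(B)^{1-p}=\phi(U_B)$. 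Here one only uses that $A$ and $B$ are measurable of positive finite measure, which is precisely what makes $U_A,U_B$ well defined; finiteness is automatic inside the compact manifold $\mathcal M$.

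For the maximality and uniqueness statement, first observe $\mu(K)\in(0,\infty)$: it is finite since $K$ is compact and $\mu$ is the Riemannian volume of the compact manifold $\mathcal M$, and positive since otherwise $\mathcal D_K=\varnothing$ and there is nothing to prove. The structural fact I would use is that $t\mapsto t^p$ is \emph{strictly} concave on $[0,\infty)$ for $0<p<1$ (second derivative $p(p-1)t^{p-2}<0$ on $(0,\infty)$). Applying Jensen to $f\in\mathcal D_K$ with respect to the normalized volume $\nu:=\mu|_K/\mu(K)$ gives $\int_K f^p\dd\nu\le\bigl(\int_K f\dd\nu\bigr)^p=\mu(K)^{-p}$, i.e.\ $\phi(f)=\int_K f^p\dd\mu\le\mu(K)^{1-p}=\phi(U_K)$; so $U_K$ maximizes $\phi|_{\mathcal D_K}$ with optimal value $\mu(K)^{1-p}$.

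Uniqueness comes from the equality case of Jensen for a strictly concave integrand: equality holds iff $f$ is $\mu$-a.e.\ constant on $K$, and then $\int_K f\dd\mu=1$ pins that constant to $1/\mu(K)$, i.e.\ $f=U_K$ $\mu$-a.e.; since the members of $\mathcal D_K$ are continuous and positive this upgrades to pointwise equality on $K$ (a relatively open subset of $K$ on which $f\neq U_K$ would be a $\mu$-null open set), and in any event $\phi$ depends on $f$ only through its $\mu$-a.e.\ class, which is what the statement ultimately concerns.

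The one genuine obstacle is the equality analysis in Jensen's inequality: getting strictness, hence \emph{uniqueness} of the maximizer rather than merely its existence, is exactly where the hypotheses that the competing densities be positive and continuous are used. Everything else is bookkeeping --- excluding the degenerate measures so the uniform distributions make sense, and noting the monotonicity of $t\mapsto t^{1-p}$. (The bound could equivalently be obtained from Hölder's inequality with conjugate exponents $1/p$ and $1/(1-p)$ applied to the pair $(f^p,1)$ on $(K,\mu|_K)$, with the same constancy condition in the equality case.)
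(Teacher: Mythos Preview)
Your proof is correct and arrives at the same two facts as the paper---$\phi(U_K)=\mu(K)^{1-p}$ and the characterization of the maximizer---but the route differs. The paper argues via constrained optimization: it notes that $\phi$ is strictly concave, forms the Lagrangian $\mathcal L_\lambda(f)=\int_K f^p\dd\mu-\lambda(\int_K f\dd\mu-1)$, computes its differential $\dd(\mathcal L_\lambda)_f(h)=\int_K(pf^{p-1}-\lambda)h\dd\mu$, and reads off that any critical point must be constant a.e.; strict concavity then makes that critical point the unique maximum. You bypass the variational apparatus entirely by applying Jensen's inequality for the strictly concave map $t\mapsto t^p$ against the normalized measure $\mu|_K/\mu(K)$, which delivers the upper bound $\phi(f)\le\mu(K)^{1-p}$ and its equality case in one stroke. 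Your approach is more elementary and self-contained (no functional derivatives), while the paper's Lagrangian viewpoint is closer to a general optimization template; both rely on the same underlying strict concavity, and both handle the ``furthermore'' clause identically via the explicit formula $\phi(U_K)=\mu(K)^{1-p}$.
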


\begin{proof}
    The map $\phi$ is strictly concave, as the composition of the strictly concave
    function $x\mapsto x^p$ with the linear map $x\mapsto \int x \dd\mu$.
    Since density functions integrate to~$1$ over~$K$, maximizing~$\phi$ corresponds to an
    optimization problem under constraint, which can be solved using the
    Lagrangian:
\[
        \mathcal L_\lambda(f) := \int_K f^p \dd \mu - \lambda \left(\int_K f\dd \mu -1\right),
        \textnormal{ with differential }
        \dd \left({\mathcal L_\lambda}\right)_f(h) = \int_K \left(pf^{p-1} - \lambda\right)h \dd \mu. 
    \]
    Then, $\dd \left({\mathcal L_\lambda}\right)_f = 0$ is equivalent to $pf^{p-1} - \lambda$ being~$0$ almost everywhere, i.e. $f$ being equal to $\left(\frac{\lambda}{p}\right)^{\frac{1}{p-1}}$ (a constant determined by $\int f \dd\mu = 1$) almost everywhere, i.e. $f$ being the density of the uniform measure on $K$.

    Now, recalling that $0<p<1$, we have:    
    \[
        \phi(U_K) = \int \left(\frac{\dd U_K}{\dd \mu}\right)^p\dd \mu =  \int_K \frac{1}{\mu(K)^p}\dd \mu = \int \frac{\mu(K)}{(\mu(K))^p} \dd U_K = \mu(K)^{1-p},
    \]
    which proves the second part of the statement.
\end{proof}
A direct consequence of \cref{prop:unif_optim} and the fact that $\mathcal D_{\mathcal{M}}$ is dense in the set of probability densities of $L^p(\mathcal{M})$ is the following corollary.
\begin{corollary}\label{cor:uniform_distrib_max}
    Let $\mathcal D$ be the set of probability densities over $\mathcal{M}$, and $p\in (0,1)$.
	Then, the map $f\in \mathcal D \mapsto \int f^p \dd\mu$ reaches its
	maximum at the density $f := \frac {\dd U_{\mathcal M}}{\dd \mu}$.
\end{corollary}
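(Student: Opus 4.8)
The plan is to derive the corollary from Proposition~\ref{prop:unif_optim} (instantiated at $K=\mathcal{M}$) by a soft continuity-plus-density argument, the stated density of $\mathcal{D}_{\mathcal{M}}$ in $\mathcal{D}$ being the only external input. First I would note that, since $\mathcal{M}$ is a compact Riemannian $d$-manifold, Proposition~\ref{prop:unif_optim} applies verbatim with $K=\mathcal{M}$ and yields that the map $\phi\colon f\mapsto\int f^p\dd\mu$ restricted to $\mathcal{D}_{\mathcal{M}}$ attains its (unique) maximum at the uniform density, with value $\phi(U_{\mathcal{M}})=\mu(\mathcal{M})^{1-p}$; here $\mu(\mathcal{M})<\infty$ by compactness. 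Moreover the constant function $\dd U_{\mathcal{M}}/\dd\mu\equiv 1/\mu(\mathcal{M})$ is positive and continuous, so $U_{\mathcal{M}}\in\mathcal{D}_{\mathcal{M}}\subseteq\mathcal{D}$; it therefore suffices to prove $\phi(f)\le\mu(\mathcal{M})^{1-p}$ for every $f\in\mathcal{D}$.

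Second, I would equip $L^p(\mathcal{M})$ with its standard metric $d(f,g)=\int_{\mathcal{M}}|f-g|^p\dd\mu$ (a genuine metric, though not a norm, for $p\in(0,1)$, because $|s+t|^p\le|s|^p+|t|^p$) and observe that $\phi$ is $1$-Lipschitz for $d$: for $f,g\ge 0$ one has $\phi(f)=d(f,0)$, whence $|\phi(f)-\phi(g)|\le d(f,g)$ by the quasi-triangle inequality. In particular $\phi$ is continuous on $\mathcal{D}$. It is also worth recording that, for $p\in(0,1)$ and $\mu$ finite, every probability density belongs to $L^p(\mathcal{M})$ --- which is precisely the bound being proved --- so that $\mathcal{D}$ is indeed the full set of probability densities on $\mathcal{M}$.

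Third, I would invoke the density of $\mathcal{D}_{\mathcal{M}}$ in $\mathcal{D}$: given $f\in\mathcal{D}$, pick $f_n\in\mathcal{D}_{\mathcal{M}}$ with $d(f_n,f)\to 0$; then $\phi(f)=\lim_n\phi(f_n)\le\sup_{\mathcal{D}_{\mathcal{M}}}\phi=\mu(\mathcal{M})^{1-p}=\phi(U_{\mathcal{M}})$, so the maximum of $\phi$ over $\mathcal{D}$ is attained at $U_{\mathcal{M}}$, which is the claim. (If uniqueness of the maximizer is also wanted, it follows from the strict form of Jensen's inequality applied with the probability measure $\mu/\mu(\mathcal{M})$ and the strictly concave map $x\mapsto x^p$; this in fact gives a one-line alternative proof of the inequality itself, bypassing the density argument.) I do not anticipate a real obstacle: the only points requiring care are using the correct topology on $L^p$ for $p<1$, and observing that the supremum over the dense subset $\mathcal{D}_{\mathcal{M}}$ is actually realized inside $\mathcal{D}$, which holds because $U_{\mathcal{M}}$ is itself a continuous positive density.
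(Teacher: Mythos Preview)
Your proposal is correct and follows exactly the route the paper indicates: apply \cref{prop:unif_optim} at $K=\mathcal{M}$, then extend the bound from $\mathcal{D}_{\mathcal{M}}$ to all of $\mathcal{D}$ via the stated $L^p$-density, using continuity of $\phi$. You supply the details the paper leaves implicit (the $L^p$ metric for $p<1$ and the $1$-Lipschitz property of $\phi$), and your parenthetical Jensen observation is a valid shortcut that in fact subsumes the whole density argument.
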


\subsection{Empirical evaluation}\label{subsec:empirical_analysis}

\begin{wrapfigure}{r}{0.5\textwidth}
  \centering
  \vspace{-2mm}
  \includegraphics[width=0.91\linewidth]{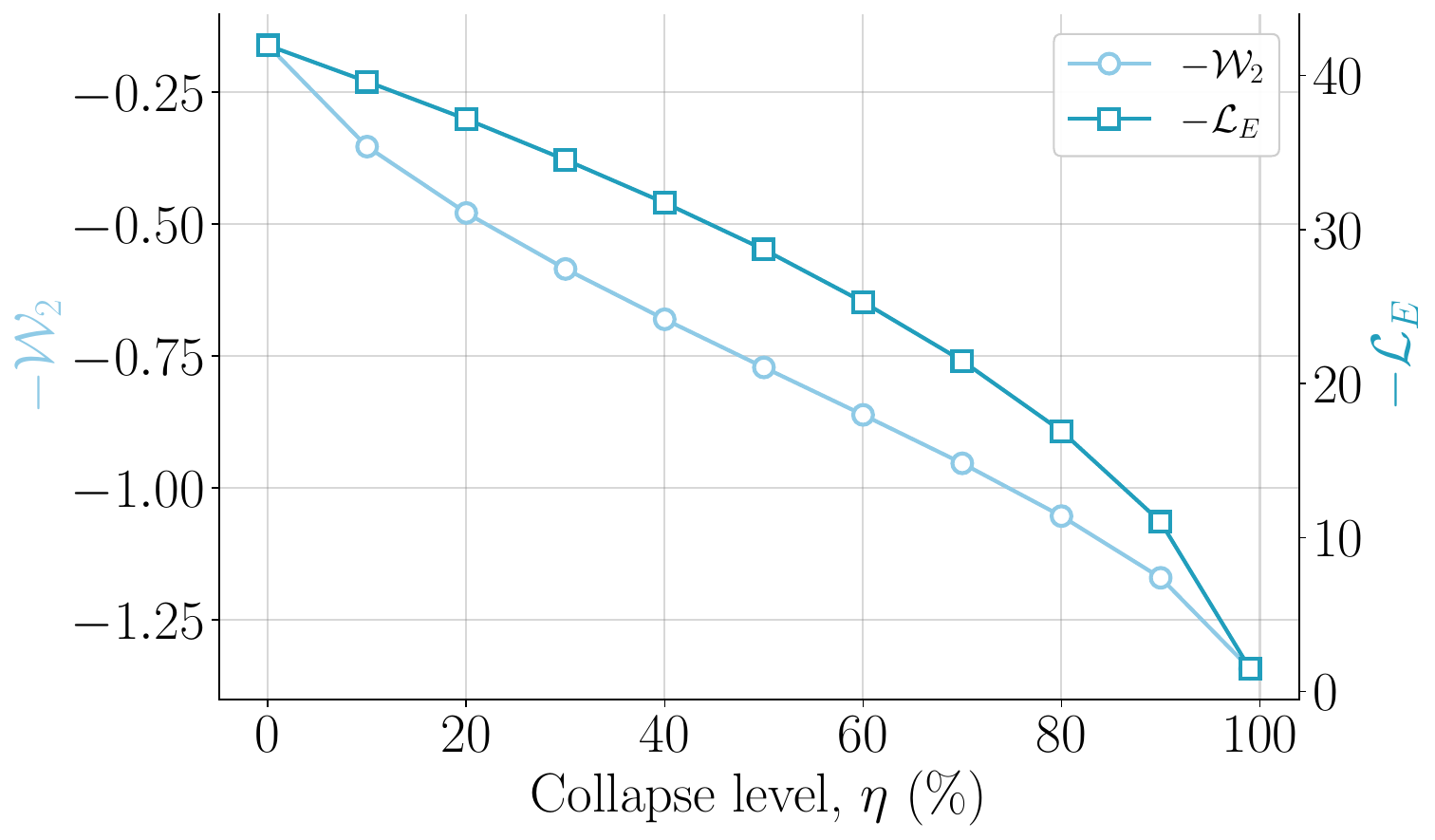}
  \captionsetup{font=small}
  \caption{\textbf{Sensitivity to Dimensional Collapse.} The metrics $-\wasser$ and $-\lossMST$ jointly decrease as the collapse level ($\eta$) increases. }
    \label{fig:sensitivity_collapse}
\end{wrapfigure}

We conduct an empirical study on synthetic data to validate T-REG's ability to prevent dimensional collapse and promote sample uniformity.

\noindent \textbf{Preventing dimensional collapse.} Following~\citet{fangrethinking}, we assess T-REG's effectiveness against dimensional collapse by measuring how sensitive its loss $\mathcal{L}_{\text{E}}$ is to simulated collapse. 
%
Specifically, we generate $10,000$ data points in dimension~$1024$ from an isotropic Gaussian distribution, then zero out a fraction $\eta$ of their coordinates to control the collapse level.
As shown in~\Cref{fig:sensitivity_collapse},  the sensitivity of the T-REG loss to~$\eta$ is similar to that of the $\mathcal{W}_2$ loss from~\citet{fangrethinking}, indicating that T-REG effectively penalizes dimensional collapse.

\paragraph{Promoting sample uniformity.}\label{para:sample-unif}
We apply T-REG alone to optimize a given point cloud and analyze its behavior in both low-dimensional and high-dimensional scenarios (\Cref{fig:point_cloud_optim}).
%
For this, we use two different input point clouds:
\textit{(i)} a degenerate set of $256$ points on a $1$-d curve (corresponding to the orange dots in~\Cref{fig:point_cloud_3d,fig:point_cloud_3d_le});
\textit{(ii)} a set of $256$ points in $\mathbb{R}^{256}$,  initially concentrated around a specific point on the unit sphere, where each point $x_i$ is sampled as $x_i = e_1 + \epsilon_i$, with $\epsilon_i$ drawn uniformly from a ball of radius $0.001$.

As illustrated in~\Cref{fig:point_cloud_3d}, optimization with T-REG successfully transforms the initial point cloud in a $3$-d space into a uniformly distributed point cloud on the sphere, as per Corollary~\ref{cor:uniform_distrib_max}. 
This is achieved through the combination of \mst{} length maximization and sphere constraint.
%
The sphere constraint is crucial here: when optimizing only the \mst{} length, \lossMST, (see~\Cref{fig:point_cloud_3d_le}), the optimization fails to converge.

In high dimensions~(\Cref{fig:n_dim_point}), we analyze the distribution of cosine similarities between the embeddings. The initial distribution shows a sharp peak near 1, indicating highly correlated samples on the sphere. 
After optimization with T-REG, the distribution becomes almost a Dirac slightly below~$0$, indicating that the configuration of the points is close to that of the vertices of the regular simplex, as per~\cref{prop:cv_to_splx}.



%

\section{T-REGS: T-REG for \underline{S}elf-supervised learning}\label{sec:experiments}

\textbf{T-REGS} extends T-REG to Joint-Embedding Self-Supervised Learning.
%
For an input image $i$, two transformations $t,t'$ are sampled from a distribution $\mathcal{T}$ to produce two augmented views $x=t(i)$ and $x'=t'(i)$. These transformations are typically random crops and color distortions.
We compute embeddings $z=h_\phi(f_\theta(x))$ and $z'=h_\phi(f_\theta(x'))$ using a backbone $f_\theta$ and projector $h_\phi$. 
T-REGS acts as a regularization applied separately to the embedding batches $Z=[z_1, ..., z_n]$ and $Z'=[z'_1, ..., z'_n]$. 
Specifically, embeddings from each view batch, $Z$ and $Z'$, are treated as points in a high-dimensional space, and Kruskal's algorithm~\cite{kruskal1956shortest} is used to construct two Minimum Spanning Tree (\mst{}), one for each view batch.
These \mst{}s yield two T-REG regularization terms, which are combined into the T-REGS objective as follows:
%
\begin{equation}
    \mathcal{L}_{\text{T-REGS}}(Z,Z') = 
    \underbrace{\gamma \, \mathcal{L}_\text{E}(Z) + \lambda \, \mathcal{L}_\text{S}(Z)}_{\mathcal{L}_\text{T-REG}(Z)} + \underbrace{\gamma \, \mathcal{L}_\text{E}(Z') + \lambda \, \mathcal{L}_\text{S}(Z')}_{\mathcal{L}_\text{T-REG}(Z')}.
    \label{eq:loss} 
\end{equation}
where $\gamma, \lambda$ control the contribution of each term.

In practice, T-REGS can be used as \textit{(i)} a standalone regularization, 
combined directly with an invariance term such as the Mean Squared Error: 
$\mathcal{L}(Z,Z') = \beta \, \mathcal{L}_\text{MSE}(Z,Z') + \mathcal{L}_\text{T-REGS}(Z,Z')$,
where $\mathcal{L}_\text{MSE}(Z,Z') = \frac{1}{n} \sum_i \|z_i - z'_i\|_2^2$ and $\beta$ is a mixing parameter;
or \textit{(ii)} as an auxiliary loss to existing SSL methods: 
$\mathcal{L}(Z,Z') = \beta \, \mathcal{L}_{\text{SSL}}(Z,Z') + \mathcal{L}_\text{T-REGS}(Z,Z')$, 
where $\mathcal{L}_{\text{SSL}}(Z,Z')$ denotes the objective function of a given SSL method, and $\beta$ is a mixing parameter.
An overview of T-REGS is presented in \Cref{fig:overview}.


The remainder of the section provides evaluations of T-REGS on standard SSL benchmarks (\Cref{subsec:evaluation}) 
and on a multi-modal application (\Cref{subsec:multimodal}); 
as well as loss coefficients and computational analyses (\Cref{subsec:analysis}).
Implementation details and further analyses are in Appendices~\ref{sec:experimental-setting} and~\ref{sec:ana}.

\subsection{Evaluation on standard SSL benchmark}\label{subsec:evaluation}
\begin{table}[t]
    \centering
    \resizebox{0.55\linewidth}{!}{
    \begin{tabular}{
        lll cc
    }
        \toprule
          \multicolumn{2}{c}{Method}  & & \textbf{CIFAR-10}~\cite{krizhevsky2009learning} & \textbf{CIFAR-100}~\cite{krizhevsky2009learning} \\
        \midrule
        
         \multirowcell{3}[0pt][l]{Zero-CL~\cite{zhang2021zero}} &  & & 91.3 & \textbf{\results{68.5}} \\
        & + \lu & & 91.3 & 68.4 \\
        & + \wasser & & 91.4 & \textbf{\results{68.5}} \\
        \hline
        \multirowcell{4}[0pt][l]{MoCo v2~\cite{chen2020improved}} &  & & 90.7 & 60.3 \\
         & + \lu &  & {91.0} & 61.2 \\
          & + \wasser & & 91.4 & 63.7 \\
        \hline
          \multirowcell{4}[0pt][l]{BYOL~\cite{grill2020bootstrap}} &  & & 89.5 & 63.7 \\
          & + \lu & & 90.1 & 62.7 \\
          & + \wasser & & 90.1 & 65.2 \\
         &\cellcolor{mbluefigure!15} + \lossTREGS{} & \cellcolor{mbluefigure!15} &\cellcolor{mbluefigure!15}90.4 &\cellcolor{mbluefigure!15}65.7 \\
        \hline
         \multirowcell{4}[0pt][l]{Barlow Twins~\cite{zbontar2021barlow}} &  & & 91.2 & 68.2 \\
         & + \lu & & 91.4 & 68.4 \\
         & + \wasser & & 91.4 & \textbf{\results{68.5}} \\
          &\cellcolor{mbluefigure!15} + \lossTREGS{} &\cellcolor{mbluefigure!15} &\cellcolor{mbluefigure!15}\textbf{\results{91.8}} &\cellcolor{mbluefigure!15}\textbf{\results{68.5}} \\
        \midrule
        \rowcolor{mbluefigure!15} \lossMSE  & + \lossTREGS{} &  & 91.3 &  67.4 \\
        \bottomrule
        \end{tabular}
    }
    \vspace{0.5em}
    \captionsetup{font=small}
    \caption{\textbf{Comparison with \wasser{} regularization~\cite{fangrethinking} on CIFAR-10/100.}
    The table is inherited from~\citet{fangrethinking}, and we follow the same protocol: ResNet-18 models are pre-trained for 500 epochs on CIFAR-10/100, with a batch size of 256, followed by linear probing.
    We report Top-1 accuracy ($\%$). \textbf{\results{Boldface}} indicates best performance. 
    }
    \label{tab:sota-cifar}
\end{table}
\begin{table}[t]
    \centering
    \resizebox{0.75\linewidth}{!}{
        \begin{small}
        \begin{tabular}{
            clll cc cc c
        }
            \toprule
             \multirowcell{2}[0pt][l]{\#\\views} & & & & \textbf{Imagenet-100}~\cite{tian2020contrastive} & & \multicolumn{2}{c}{\textbf{ImageNet-1k}~\cite{deng2009imagenet}}  \\
            \cline{4-8} 
             & Method &  & & Top-1 & & Batch Size & Top-1  \\
            \midrule
             \multirowcell{3}[0pt][l]{8} 
            & SwAV~\cite{caron2020unsupervised} &  & & 74.3 & & 4096 &  66.5   \\
            & FroSSL~\cite{skean2024frossl} &  & &  79.8 & & - & -  \\
            & SSOLE~\cite{huang2025ssole} &  & & \results{\textbf{82.5}} & & 256 & \results{\textbf{73.9}} \\
            \hline
             \multirowcell{17}[0pt][l]{2} & SimCLR~\cite{chen2020simple} &  & & 77.0  & & 4096 & 66.5  \\ 
            & MoCo v2~\cite{chen2020improved} & & &  79.3 & & 256 & 67.4 & \\
            & SimSiam~\cite{chen2021exploring} & & & 78.7 & &
            256 & 68.1   \\
            & W-MSE~\cite{ermolov2021whitening} &  & & 69.1 & & 512 &  65.1  \\
            & Zero-CL~\cite{zhang2021zero} &  & & 79.3 & & 1024 & \textbf{\results{68.9}}   \\
            & VICReg~\cite{bardes2022vicreg} &  & & 79.4 & & 1024 & 68.3    \\
            & CW-RGP~\cite{weng2022investigation} &  & & 77.0 & & 512 & 67.1  \\
            & INTL~\cite{weng2024modulate} & & & \textbf{\results{81.7}} & & 512 & \textbf{\results{69.5}}   \\
            \cmidrule{2-9}
             &  \multirowcell{2}[0pt][l]{BYOL~\cite{grill2020bootstrap}}  & & & 80.3 & & 1024 & 66.5 & \\
             & &  \cellcolor{mbluefigure!15} + \lossTREGS{}  &  \cellcolor{mbluefigure!15}  &  \cellcolor{mbluefigure!15}\results{\textbf{80.8}} & \cellcolor{mbluefigure!15} & 1024 \cellcolor{mbluefigure!15} &  \cellcolor{mbluefigure!15}67.2 \\
            \cmidrule{2-9}
             &   \multirowcell{2}[0pt][l]{Barlow Twins~\cite{zbontar2021barlow}}  & & & 80.2 & & 2048 &  67.7 & \\
             & & \cellcolor{mbluefigure!15} + \lossTREGS{} & \cellcolor{mbluefigure!15} &\cellcolor{mbluefigure!15}\textbf{\results{80.9}} & \cellcolor{mbluefigure!15} &\cellcolor{mbluefigure!15} 2048 & \cellcolor{mbluefigure!15}67.8 \\
            \cmidrule{2-9}
            &\cellcolor{mbluefigure!15} \lossMSE   & \cellcolor{mbluefigure!15} + \lossTREGS{} & \cellcolor{mbluefigure!15} & \cellcolor{mbluefigure!15} 80.3 & \cellcolor{mbluefigure!15} & \cellcolor{mbluefigure!15} 512 & \cellcolor{mbluefigure!15} \results{\textbf{68.8}}  \\
            \bottomrule
        \end{tabular}
        \end{small}
    }
    \vspace{0.5em}
    \captionsetup{font=small}
    \caption{
    \textbf{Linear Evaluation on ImageNet-100/1k.}  We report Top-1 accuracy ($\%$). The top-4 methods are \textbf{\results{boldfaced}}. For ImageNet-100, ResNet-18 are pre-trained for 400 epochs using a batch size of 256; for ImageNet-1k, ResNet-50 is pre-trained for 100 epochs. The table is mostly inherited from~\citet{weng2024modulate}.
    }
    \label{tab:sota-im}
\end{table}
We evaluate the representations obtained after training with \name{}, 
either directly combined with view invariance or integrated with existing methods (i.e., BYOL, and Barlow Twins) 
on CIFAR-10/100~\cite{krizhevsky2009learning}, ImageNet-100~\cite{tian2020contrastive}, and ImageNet~\cite{deng2009imagenet}.  
Our implementation is based on \texttt{solo-learn}~\cite{da2022solo}, and we use \texttt{torchph}~\cite{Hofer19a} for the \mst{} computations.
For T-REGS as a standalone regularizer, we use $\beta =10,\, \gamma = 0.2,\,\lambda=8e-4$.

\noindent \textbf{Evaluation on CIFAR-10/100.} 
%
We first focus on comparisons with~\citet{fangrethinking} ($\wasser{}$-regularized methods), 
following the same protocol on CIFAR-10/100~\cite{krizhevsky2009learning} with ResNet-18.
As shown in~\Cref{tab:sota-cifar}, T-REGS demonstrates strong standalone performance, 
achieving results within $0.1\%$ of the best $\wasser{}$-regularized approach on CIFAR-10. 
Additionally, using T-REGS as an auxiliary loss consistently improves performance over the respective baselines, 
and over variants that use $\mathcal{L}_u$ or $\mathcal{W}_2$ as additional regularization terms.

\noindent \textbf{Evaluation on ImageNet-100/1k.}  To assess the scalability of T-REGS, 
we evaluate our model on ImageNet-100 and ImageNet-1k using ResNet-18 and ResNet-50, respectively, 
following the standard linear evaluation protocol on ImageNet and comparing with the state of the art. We report Top-1 accuracy. 
As shown in~\Cref{tab:sota-im}, T-REGS is competitive with methods that use the same number of views (e.g., INTL), 
and improves existing methods when used as an auxiliary loss.

\subsection{Evaluation on Multi-modal application: image-text retrieval}\label{subsec:multimodal}
\begin{table*}[ht]
    \centering
    \resizebox{\linewidth}{!}{
         \begin{tabular}{
            lll cccccc c|c ccccccc  
        }
            \toprule
             &  & \multicolumn{7}{c}{\textbf{Flickr30k~\cite{plummer2015flickr30k}}} & & & \multicolumn{7}{c}{\textbf{MS-COCO~\cite{lin2014microsoft}}} \\
            \cline{3-9} \cline{12-18}
             &  & \multicolumn{3}{c}{i $\to$ t} & & \multicolumn{3}{c}{t $\to$ i} & & & \multicolumn{3}{c}{t $\to$ i} & & \multicolumn{3}{c}{t $\to$ i} \\
             Method & &  R@1 & & R@5 & & R@1 & & R@5 & & & R@1 & & R@5 & & R@1 & & R@5  \\
            \midrule
             Zero-Shot & & 71.1 & & 90.4 & &  68.5 & & 88.9 & & & 31.9 & & 56.9 & & 28.5 & & 53.1  \\
            Finetune & & 81.2 & & 95.5 & & 80.7 & & 95.8 & & & 36.7 & & 63.6 & & 36.9 & & 63.9  \\
            ES~\cite{li2022understanding} & & 71.8 & & 90.0 & & 68.5 & & 88.9 & & & 31.9 & & 56.9 & & 28.7 & & 53.0  \\
            $i$-Mix~\cite{leemix} & & 72.3 & & 91.7 & & 69.0 & & 91.1 & & & 34.0 & & 63.0 & & 34.6 & & 62.2  \\
            Un-Mix~\cite{shen2022mix} & & 78.5 & & 95.4 & & 74.1 & & 91.8 & & & 38.8 & & 66.2 & & 33.4 & & 61.0  \\
            $m^3$-Mix~\cite{oh2023geodesic} & & 82.3 & & 95.9 & & \results{\textbf{82.7}} & & 96.0 & & & 41.0 & & 68.3 & & 39.9 & & 67.9 \\
            \rowcolor{mbluefigure!15} $\mathcal{L}_\text{CLIP}$ + \lossTREGS{} & & \results{\textbf{83.2}} & & \results{\textbf{96.0}} & & 80.8 & & \results{\textbf{96.4}} 
            & & & \results{\textbf{41.6}} & & \results{\textbf{68.7}} & & \results{\textbf{41.5}} & & \results{\textbf{68.7}}  \\
            \bottomrule
    \end{tabular}
    }
    \caption{\textbf{Cross-Modal Retrieval after finetuning CLIP.}  
    Image-to-text (i $\to$ t) and text-to-image (t $\to$ i) retrieval results (top 1/5 Recall: R@1, R@5).
    The table is mostly inherited from~\citet{oh2023geodesic}. \results{\textbf{Boldface}} indicates the best performance.
    }
    \label{tab:retrieval}
\end{table*}
T-REGS can also be applied when branches differ in architecture and data modalities, as it regularizes each branch independently.
Accordingly, we demonstrate its capabilities in a joint-embedding multi-modal setting.

Pre-trained multi-modal models, such as CLIP~\cite{radford2021learning}, provide broadly transferable embeddings.
However, several works have shown that CLIP preserves distinct subspaces 
for text and image—the \textit{modality gap}~\cite{liang2022mind,oh2023geodesic,kim2025enhanced}.
Prior analyses~\cite{oh2023geodesic,yamaguchi2025post} relate this gap to low embedding uniformity; notably, 
CLIP’s embedding space often remains non-uniform even after fine-tuning, which can hinder transferability.
Given that T-REGS improves embedding uniformity when used as an auxiliary loss, 
we evaluate its impact on CLIP fine-tuning. 
We fine-tune CLIP using T-REGS as an auxiliary regularizer; more precisely, $\mathcal{L}_\text{T-REGS}$ is applied independently to the image and text branches and combined with the standard $\mathcal{L}_\text{CLIP}$ objective~\cite{radford2021learning} to encourage more robust and uniformly distributed representations. 
We follow the protocol of $m^3$-Mix~\cite{oh2023geodesic}.
We report R@1 and R@5 for image-to-text and text-to-image retrieval on Flickr30k and MS-COCO in~\Cref{tab:retrieval}, which shows that
T-REGS improves performance over prior methods.


\subsection{Analysis}\label{subsec:analysis}
\begin{table}[t]
\begin{minipage}{0.42\textwidth}
    \resizebox{\linewidth}{!}{
    \begin{small}
    \begin{tabular}{
        ccccc ccc cc cc
    }
        \toprule
        \multicolumn{5}{c}{\textbf{Coefficients}} & & \multicolumn{2}{c}{\textbf{Scaling}} \\
        \cline{1-5} \cline{7-8}
          $\beta$ & & $\gamma$ & & $\lambda$ & & $\frac{\beta}{\gamma}$ & $\frac{\gamma}{\lambda}$ & & \textbf{Top-1}  \\
        \midrule
         1 & & - & & -  & & - & - & & collapse  \\
         1 & & 1 & & - & & 1 & - & & collapse  \\
         10 & & 1 & & 1 & & 10 & 1 & & collapse  \\
         10 & & 0.5 & & 5e-2 & & 20 & 10 & & 25.7 \\
         10 & & 0.2 & & 2e-2 & & 50 & 10 & &  45.4 \\
         10 & & 0.5 & & 2.5e-3 & & 20 & 200 & & 65.0 \\
         10 & & 0.2 & & 1e-3 & & 50 & 200 & & 65.3  \\
         10 & & 0.5 & & 2e-3 & & 20 & 250 & & 64.9 \\
         10 & & 0.2 & & 8e-4 & & 50 & 250 & & \textbf{\results{66.1}} \\
         10 & & 0.02 & & 8e-5 & & 100 & 300 & & 63.3  \\
        \bottomrule
    \end{tabular}
    \end{small}
    }
    \vspace{0.5em}
    \captionsetup{font=small}
    \caption{\textbf{Impact of coefficients.} 
    \lossMSE{} +\lossTREGS{} top-1 accuracy (\%) on ImageNet-1k with online evaluation protocol over 50 epochs.
    \textbf{\results{Boldface}} indicates best performance.
    }
\label{tab:coef}
\end{minipage}
\hfill
\begin{minipage}{0.57\textwidth}
     \resizebox{\linewidth}{!}{
        \begin{tabular}{ll cccc }
            \toprule
             \textbf{Method} & & \textbf{Complexity} & \textbf{$B$ range} & \textbf{$D$ range} & \textbf{Wall-clock time} \\
             \midrule
             SimCLR~\cite{chen2020simple} & & $\mathcal{O}(B^2 \cdot D)$ & [2048-4096] & [256-1024] & 0.22 $\pm$ 0.03 \\
             VICReg~\cite{bardes2022vicreg} & &  $\mathcal{O}(B \cdot D^2)$ & [1024-4096] & [4096-8192] &  0.23 $\pm$ 0.02 \\
             \midrule
             \rowcolor{mbluefigure!15} \lossMSE{} + \lossTREGS{} & & $\mathcal{O}(B^2 (D \cdot \text{log}B))$ & [512-1024] & [512-2048] & 0.20 $\pm$ 0.001 \\
             \bottomrule
        \end{tabular}
     } 
     \vspace{0.5em}
    \captionsetup{font=small}
    \caption{\textbf{Complexity and computational cost.} 
    Comparison between different methods is performed, with training on ImageNet-1k distributed across 4 Tesla H100 GPUs. The wall-clock time (sec/step) is averaged over 500 steps.
    $B,D$ ranges are reported from~\citet{bardes2022vicreg,garrido2023rankme}.
    }
    \label{tab:compute}
\end{minipage}
\end{table}
\paragraph{Loss coefficients.}  
We determine the final coefficients for T-REGS as a standalone regularizer on ImageNet-1k as follows 
(the same approach was applied when combining T-REGS with existing methods).
Initial experiments revealed that maintaining $\beta \geq \gamma \geq \lambda$ was essential to prevent representation collapse.
To efficiently explore the parameter space while managing computational costs, we fixed $\beta$ (the largest coefficient) 
and systematically varied the ratios $\frac{\beta}{\gamma}$ and $\frac{\gamma}{\lambda}$ using 50-epoch online probing~\cite{garrido2023rankme}.
As shown in \Cref{tab:coef}, both \lossMST~ and \lossS~ contribute to performance, with \lossS~ requiring a smaller weight. 
This suggests that the actual radius of the sphere is not critical, thereby validating our choice of a soft sphere constraint instead of a hard one.

\paragraph{Computational cost.} We evaluate the computational cost of T-REGS.
The \mst{}s are computed with Kruskal's algorithm~\cite{kruskal1956shortest}, 
whose worst-case time is $\mathcal{O}(B^2 (D \cdot \mathrm{log} B))$, with $B$ the batch size and $D$ the embedding dimension.
Although Kruskal's main loop is sequential, preprocessing (computing the distance matrix and sorting its entries) 
dominates in practice and can be efficiently parallelized on GPUs (as in \texttt{torchph}~\cite{Hofer19a}, used in our implementation).
Empirically, T-REGS matches the per-step wall-clock of VICReg and SimCLR 
(averaged over 500 steps during training on ImageNet-1k with $B=512, D=1024$; see \Cref{tab:compute}).
\section{Conclusion}
We introduced T-REG, a regularization approach that prevents dimensional collapse and promotes sample uniformity.
Our method maximizes the length of the minimum spanning tree (\mst{}), coupled with a sphere constraint.
Our analysis connects \mst{} optimization to entropy maximization and uniformity on compact manifolds, providing theoretical guarantees corroborated by empirical results.
We extend T-REG to \underline{S}elf-supervised learning, yielding T-REGS. On CIFAR-10/100 and ImageNet-100/1k, T-REGS is competitive with \wasser-regularized and state-of-the-art methods, both as a standalone regularizer and as an auxiliary term, underscoring its effectiveness.



\section*{Acknowledgments}
This work was partially supported by Inria Action Exploratoire {\sc PreMediT} (Precision Medicine using Topology), a Hi!Paris grant  and ANR/France 2030 program. We were granted access to the HPC resources of IDRIS under the allocations 2024-AD011014747R1 and 2025-AD011016121 made by GENCI. 
We would like to thank Robin Courant for his helpful feedback.

\newpage
\bibliographystyle{abbrvnat}
\bibliography{citations}

\newpage
\appendix
\addcontentsline{toc}{section}{Appendix} 
\part{\centering {\large Appendix to}\\
T-REGS: Minimum Spanning Tree Regularization for
Self-Supervised Learning}

\parttoc

{
\section{Missing proofs from \cref{subsec:theoretical_analysis}}
\label{sec:missing-proofs}
}

{
\begin{proof}[Proof of \cref{prop:cv_to_splx} (case $n<d+1$)]
    Let $z_1, \dots, z_n\in{B}$, and let $H{\subset
    \mathbb{R}^d}$ be {an $n$-dimensional affine space
    containing} $z_1, \dots, z_n$. Let ${B}_H$ be the
    $(n-1)$-dimensional Euclidean ball~${B}\cap H$, and ${S}_H =
    {S}\cap H$ its bounding sphere. Finally, let $r_H\leq
    {r}$ be the radius of~${B}_H$ (and of~${S}_H$).
    Inside $H$, the same calculation as in the case $n=d+1$ shows that 
    \[
	E\left(\mst\left(\left\{z_1, \dots, z_n\right\}\right)\right)\leq
	E\left(\mst\left(\left\{z^*_1, \dots, z^*_n\right\}\right)\right) =
	(n-1)\, r_H\sqrt{\frac{2n}{n-1}} 
    \]
    for any points $z^*_1, \dots, z^*_n$ lying at the vertices of a regular
    $(n-1)$-simplex inscribed in~${S}_H$. The quantity on the right-hand
    side of the equality is bounded above by
    ${(n-1)r\sqrt{\frac{2n}{n-1}}}$, which is attained when {$r_H = r$, i.e., when} the
    affine subspace~$H$ contains the origin, {or equivalently,} when~${S}$ is the
    smallest circumscribing sphere of~$z^*_1, \dots, z^*_n$.
\end{proof}
}

\begin{proof}[{Proof of \cref{lem:MST_vs_total_dist}}]
{
We fix~$d$ and proceed by induction on $n$. 
}

{
For $n=1$ we have:
\[ E\left(\mst\left(\{z_1\}\right)\right) = 0 = \frac{2}{1}\,\sum_{1\leq i < j\leq n} \left\|z_i-z_j\right\|_2. \]
}

{
Assume now that the result holds for all $n$ up to some $n_0\geq 1$, and let us prove it for $n=n_0+1$. Let $e=(z_i, z_j)$ be an edge of $\mst\left(\{z_1, \dots, z_n\}\right)$ of maximum length. Then, the graph $G=\mst\left(\{z_1, \dots, z_n\}\right)\setminus\{e\}$ has two connected components $C$ and $D$, each of which is a tree with less than $n$ vertices. Up to a relabeling of the points of~$Z$, we can assume without loss of generality that $j=i+1$ and that the vertices of~$C$ are the points $z_1, \dots, z_i$ while the vertices of~$D$ are the points $z_{i+1}, \dots, z_n$.  We can then make the following observations:
\begin{enumerate}[leftmargin=*,label=(\textit{\roman*)}]
    \item\label{item:C_D_are_mst} $C=\mst\left(\{z_1, \dots, z_i\}\right)$ and $D=\mst\left(\{z_{i+1}, \dots, z_n\}\right)$. Indeed, otherwise, replacing $C$ by $\mst\left(\{z_1, \dots, z_i\}\right)$ and $D$ by $\mst\left(\{z_{i+1}, \dots, z_n\}\right)$, and connecting them with edge~$e$, would yield a spanning tree of $\{z_1, \dots, z_n\}$ of strictly smaller length than $G\cup\{e\}$, which would contradict the fact that $G\cup\{e\}=\mst\left(\{z_1, \dots, z_n\}\right)$.
    \item\label{item:pairwise_are_larger} For all $k\leq i<l$, we have $\left\|z_k-z_l\right\|_2 \geq \left\|z_i-z_{i+1}\right\|_2$, for otherwise the graph $G\cup \{(z_k, z_l)\}$ would be a spanning tree of $\{z_1, \dots, z_n\}$ of strictly smaller length than $G\cup\{e\}$, again contradicting the fact that $G\cup\{e\}=\mst\left(\{z_1, \dots, z_n\}\right)$.
\end{enumerate}
    Then, \ref{item:C_D_are_mst} and the induction hypothesis imply:
\begin{align}
    \sum_{k<l\leq i} \left\|z_k-z_l\right\|_2 &\geq \frac{i}{2}\, E(C),\label{eq:len_C}\\
    \sum_{i<k<l} \left\|z_k-z_l\right\|_2 &\geq \frac{n-i}{2}\, E(D).\label{eq:len_D}
\end{align}
    Meanwhile, \ref{item:pairwise_are_larger} implies:
\begin{equation}\label{eq:len_CD}
    \sum_{k\leq i<l} \left\|z_k-z_l\right\|_2 \geq i(n-i)\, \left\|z_i-z_{i+1}\right\|_2.
\end{equation}
And since $e=(z_i, z_{i+1})$ is an edge of $\mst\left(\{z_1, \dots, z_n\}\right)$ of maximum length, we have:
\begin{align}
    \left\|z_i - z_{i+1}\right\|_2 &\geq \frac{1}{i-1}\, E(C), \label{eq:len_i_C}\\
    \left\|z_i - z_{i+1}\right\|_2 &\geq \frac{1}{n-i-1}\, E(D). \label{eq:len_i_D}
\end{align}
Hence:
\begin{equation}\label{eq:sum_across}
\begin{gathered}
\begin{array}{rcl}
\displaystyle \sum_{k\leq i<l} \left\|z_k-z_l\right\|_2 & 
\stackrel{\text{\tiny{Eq.~\eqref{eq:len_CD}}}}{\geq} & 
i(n-i)\,\left\|z_i-z_{i+1}\right\|_2\\[1em]
& = & \left( \frac{n}{2} + \frac{(n-i)(i-1)}{2} + \frac{i(n-i-1)}{2} \right)\,\left\|z_i-z_{i+1}\right\|_2 \\[1em] 
& \stackrel{\text{\tiny{Eqs.~\eqref{eq:len_i_C}-\eqref{eq:len_i_D}}}}{\geq} & 
\frac{n}{2}\, \left\|z_i-z_{i+1}\right\|_2 + \frac{n-i}{2}\,E(C) + \frac{i}{2}\,E(D).
\end{array}
\end{gathered}
\end{equation}
It follows:
\[ \begin{array}{rcl}
    \displaystyle \sum_{1\leq k<l\leq n} \left\|z_k-z_l\right\|_2 
    & = & 
    \displaystyle 
    \sum_{k\leq i<l} \left\|z_k-z_l\right\|_2 + \sum_{k<l\leq i} \left\|z_k-z_l\right\|_2 + \sum_{i<k<l} \left\|z_k-z_l\right\|_2 \\[1em]
    & \stackrel{\text{\tiny{Eqs.~\eqref{eq:sum_across} and~\eqref{eq:len_C}-\eqref{eq:len_D}}}}{\geq} &
    \frac{n}{2}\, \left\|z_i-z_{i+1}\right\|_2 + \frac{n-i}{2}\, E(C) + \frac{i}{2}\, E(D) + \frac{i}{2}\, E(C) + \frac{n-i}{2}\, E(D) \\[1em]
    & = & 
    \frac{n}{2}\, \left\|z_i-z_{i+1}\right\|_2 + \frac{n}{2}\, E(C) + \frac{n}{2}\, E(D) \\[1em] 
    &=& \frac{n}{2}\, E\left(\mst\left(\{z_1, \dots, z_n\}\right)\right).  
\end{array} \]
}
\end{proof}

\section{Algorithm}

\begin{algorithm}[H]
\small
\caption{T-REGS combined with view invariance using PyTorch pseudocode}\label{alg:cap}
\begin{algorithmic}[0]
{
\ttfamily 
\vspace{0.5em}
\State \textcolor{babyblue}{\# f: encoder network, h: projection network}
\State \textcolor{babyblue}{\# $\beta$, $\gamma$, $\lambda$: coefficients of the invariance, MST length and sphere constraint losses}

\vspace{0.5em}
\For{x in loader:} \textcolor{babyblue}{\# load a batch with N samples}
    \State \textcolor{babyblue}{\# two randomly augmented versions of x}
    \State x\_1, x\_2 = augment(x), augment(x)
    
    \State \textcolor{babyblue}{\# compute the representations and the embeddings}
    \State y\_1, y\_2 = f(x\_1), f(x\_2)
    \State z\_1, z\_2 = h(y\_1), h(y\_2)
    \vspace{0.5em}
    \State inv\_loss = $\mathcal{L}_\text{MSE}$(z\_1,z\_2) \textcolor{babyblue}{\# invariance loss} 
    \State length\_mst\_loss = $\mathcal{L}_\text{E}$(z\_1) + $\mathcal{L}_\text{E}$(z\_2) \textcolor{babyblue}{\# MST length}
    \State sphere\_loss = $\mathcal{L}_\text{S}$(z\_1) + $\mathcal{L}_\text{S}$(z\_2)  \textcolor{babyblue}{\# soft sphere constraint loss}
    \State \textcolor{babyblue}{\# total loss}
    \State loss = $\beta$ inv\_loss +  $\gamma$ length\_mst\_loss +  $\lambda$ sphere\_loss
    \vspace{0.5em}
    \State \textcolor{babyblue}{\# optimization step}
    \State loss.backward()
    \State optimizer.step()

\EndFor
}
\end{algorithmic}
\end{algorithm}

\section{Implementation Details on standard SSL Benchmark}\label{sec:experimental-setting}
Our implementation is based on \href{https://github.com/vturrisi/solo-learn}{\texttt{solo-learn}}~\cite{da2022solo}, which is released under the MIT License. To compute the length of the minimum spanning tree, we rely on \href{https://github.com/c-hofer/torchph}{\texttt{torchph}}  and \texttt{Gudhi}~\cite{gudhi:urm}, both released under MIT Licenses.

Our experiments are performed on 
\begin{enumerate}
    \item ImageNet dataset~\cite{deng2009imagenet}, and a subset ImageNet-100 which are subject to the ImageNet terms of access
    \item CIFAR-10, CIFAR-100
\end{enumerate}

\subsection{Architectural and training details.}
\begin{table*}[ht]
    \centering
    \resizebox{0.98\linewidth}{!}{
    \begin{small}
    \begin{tabular}{
        lll cc cc cc cc c
    }
        \toprule
         & & & \textbf{CIFAR-10}~\cite{krizhevsky2009learning} & & \textbf{CIFAR-100}~\cite{krizhevsky2009learning}  & & \textbf{Imagenet-100}~\cite{tian2020contrastive} & & \textbf{ImageNet}~\cite{deng2009imagenet} & \\
        \midrule
        \rowcolor{verylightgray} \multicolumn{11}{c}{\textbf{\textit{Backbone }}} \\
        & backbone & & Resnet-18 & & Resnet-18 & & Resnet-18 & & Resnet-50 & \\
        \rowcolor{verylightgray} \multicolumn{11}{c}{\textbf{\textit{Projector}}} \\
        & projector layers & & \multicolumn{8}{c}{3 layers with BN and ReLU} & \\
        & projector hidden dimension & & 2048 & & 2048 & & 4096 & & 4096 & \\
        & projector output dimension & & \multicolumn{8}{c}{1024} & \\
            \rowcolor{verylightgray}\multicolumn{11}{c}{\textbf{\textit{Pre-training}}} \\
         & batch size & & 256 & & 256 & & 256 & & 512 & \\
         & optimizer & & \multicolumn{8}{c}{LARS} & \\
         & learning rate & & \multicolumn{8}{c}{$\text{base\_lr} * \text{batch size} / 256 $} & \\
         & base\_lr & & 0.4 & & 0.4 & & 0.3 & & 1 & \\
         & learning rate warm-up & & \multicolumn{6}{c}{2 epochs} & 10 epochs \\
         & learning rate schedule & & \multicolumn{8}{c}{cosine decay} & \\
         & weight decay & & \multicolumn{6}{c}{1e-4} & 1e-5 \\
         \rowcolor{verylightgray} \multicolumn{11}{c}{\textbf{\textit{Linear evaluation}}} \\
         & batch size & & \multicolumn{8}{c}{256}  \\
         & optimizer & &  \multicolumn{8}{c}{SGD} & \\
         & base\_lr & & \multicolumn{8}{c}{0.1} & \\
         & learning rate schedule & & \multicolumn{8}{c}{cosine decay} & \\
        \bottomrule
    \end{tabular}
    \end{small}
}
\captionsetup{font=footnotesize}
\caption{\textbf{Training hyperparameters.}}
\label{tab:hparams}
\end{table*}
We follow the guidance of~\citet{da2022solo} for selecting baseline hyperparameters and use the same seed: 5.
\Cref{tab:hparams} lists each dataset's architectural and training details. 

\subsection{Augmentations}
We follow the image augmentation protocol first introduced in SimCLR~\cite{chen2020simple} and now commonly used in Joint-Embedding Self-Supervised Learning~\cite{bardes2022vicreg,caron2020unsupervised,zbontar2021barlow}. Two random crops from the input image are sampled and resized to  $32 \times 32$ for CIFAR-10/100 and $224 \times 224$ for Imagenet-100/1k, followed by color jittering, converting to grayscale, Gaussian blurring, polarization, and horizontal flipping. Each crop is normalized in each color channel using the ImageNet mean and standard deviation pixel values. The following operations are performed sequentially to produce each view:

\paragraph{ImageNet-1k Data Augmentation.}
\begin{itemize}\setlength{\itemsep}{1pt}
    \item Random cropping with an area uniformly sampled with size ratio between 0.2 to 1.0,
followed by resizing to size 224 × 224.
    \item Color jittering of brightness, contrast, saturation and hue, with probability 0.8.
    \item Grayscale with probability 0.2.
    \item Gaussian blur with probability 0.5 and kernel size 23.
    \item Solarization with probability 0.1.
    \item Random horizontal flip with probability 0.5.
    \item Color normalization using ImageNet mean and standard deviation pixel values
    (with mean (0.485, 0.456, 0.406) and standard deviation (0.229, 0.224,
    0.225).)
\end{itemize}

\paragraph{CIFAR-10/100 Data Augmentation.}
\begin{itemize}\setlength{\itemsep}{1pt}
    \item Random cropping with an area uniformly sampled with size ratio between 0.2 to 1.0,
followed by resizing to size 32 × 32.
    \item Color jittering of brightness, contrast, saturation and hue, with probability 0.8.
    \item Grayscale with probability 0.2.
    \item Solarization with probability 0.1.
    \item Random horizontal flip with probability 0.5.
    \item Color normalization with mean (0.485, 0.456, 0.406) and standard deviation (0.229, 0.224,
    0.225).
\end{itemize}

\subsection{Implementation details on Image-text retrieval}

Following~\citet{oh2023geodesic}, we fine-tune CLIP ViT-B/32 on Flickr30k and MS COCO, respectively. We train for 9 epochs with a batch size of 128 using the Adam optimizer ($\beta_1 = 0.9, \beta_2 = 0.98, \epsilon = 1e-6$). We search for the best initial learning rate from $\{1e-6, 3e-6, 5e-6, 7e-6, 1e-5\}$ and weight decay from $\{1e-2, 2e-2, 5e-2, 1e-1, 2e-1\}$. For T-REGS combined with $\mathcal{L}_\text{CLIP}$, the overall objective is: $\mathcal{L}(Z,Z') = \beta \, \mathcal{L}_\text{CLIP}(Z,Z') + \lossTREGS{}(Z,Z')$ , with $\beta=1, \, \gamma=3e-3, \, \lambda=2e-5$. 
\section{Compute cost}

We conducted our experiments using NVIDIA H100 and V100 GPUs. 
%
Training a single T-REGS model requires: 
\begin{itemize}
    \item for ImageNet-1k: 15 hours using 4 H100 GPUs (i.e., amounting to 60 GPU-hours per model);
    \item for ImageNet-100: 7 hours using 1 H100 GPU;
    \item for CIFAR-10/100: 7 hours using 1 V100 GPU.
\end{itemize}
The computational cost for the entire project, including all baseline computations, experiments, hyperparameter tuning, and ablation studies, amounted to approximately 10,000 H100 GPU-hours and 5,000 V100 GPU-hours.

\section{Ablation Study}\label{sec:ana}

In this section, we conduct a comprehensive set of ablation experiments to assess the robustness and versatility of T-REGS. These experiments cover various aspects, including projector architecture, batch sizes, and seed variability.

\subsection{Projector Architecture.}
\begin{figure}
    \centering
    \includegraphics[width=0.8\linewidth]{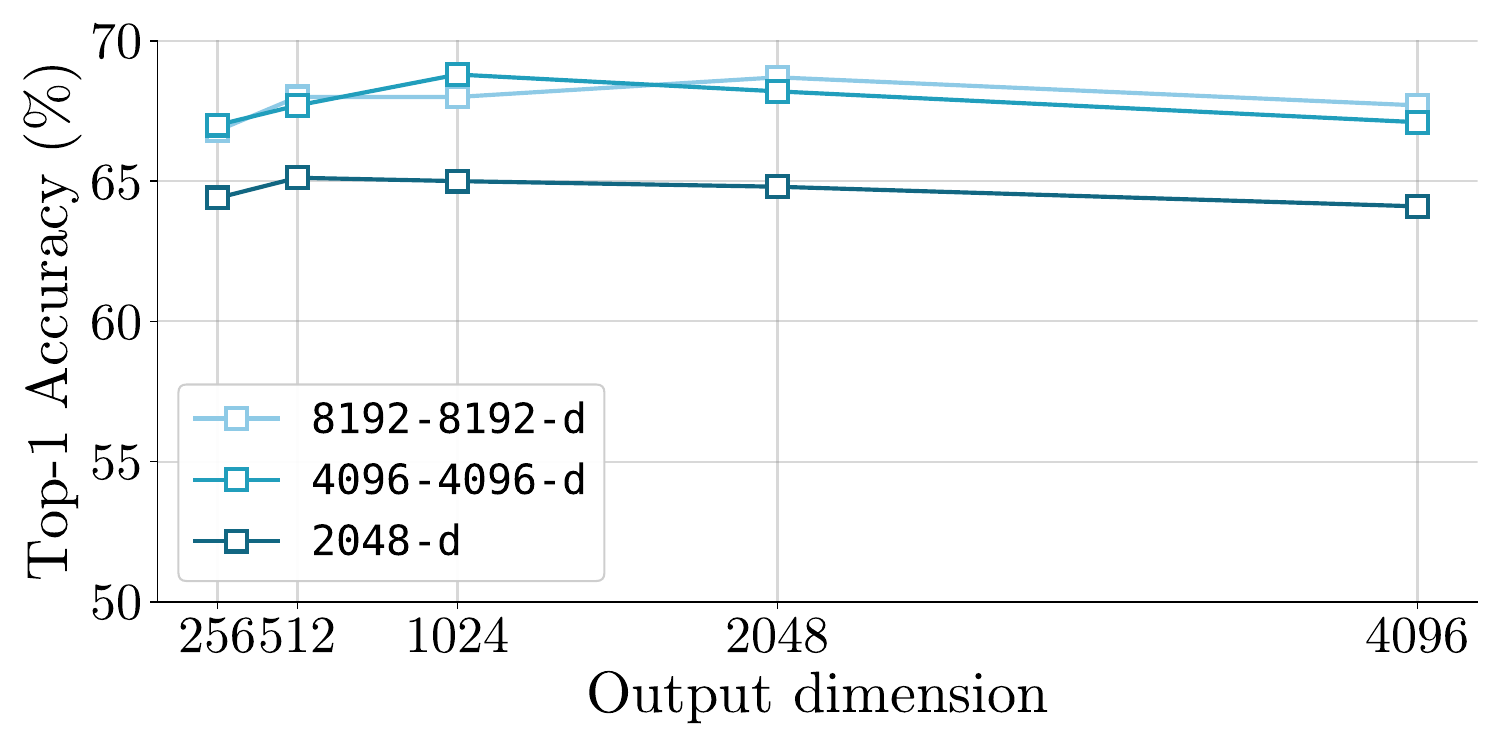}
    \caption{\textbf{Impact of the projector architecture.} \lossMSE{} +\lossTREGS{} top-1 accuracy (\%) on the linear evaluation protocol
with 100 pretraining epochs. }
    \label{fig:expander-arch}
\end{figure}

An essential difference between methods lies in how the projector ($h_\phi$, in~\Cref{fig:overview}) is designed~\cite{garrido2023duality}.
To assess the impact of the projector architecture on T-REGS performance (when combined with view invariance), we train models for 100 epochs on ImageNet-1k with different projector architectures.
We describe projector architectures using the notation \texttt{X-Y-Z}, where each number represents the dimension of a linear layer in sequence. 
Each layer (except the last) is followed by a \texttt{ReLU} activation and batch normalization. The final layer has no activation, batch normalization, or bias.
We evaluate three projector architectures:
\begin{itemize}[itemsep=0pt, topsep=-0.1em]
    \item \texttt{2048-d}: the projector used in SimCLR
    \item \texttt{8192-8192-d}: the projector used in VICReg
    \item \texttt{4096-4096-d}: a smaller variant of the VICReg projector
\end{itemize}
with \texttt{d} varying from $256$ to $4096$.
The remaining hyperparameters are the same as reported in~\Cref{sec:experimental-setting}.
Our results demonstrate that the choice of projector architecture significantly impacts the model's performance, 
with \texttt{8192-8192-d} and \texttt{4096-4096-d} consistently outperforming the \texttt{2048-d} variant. 
%

We also observe that the embedding dimension \texttt{d} (i.e., the output dimension of the projector $h_\phi$) has minimal impact on performance.
This is particularly noteworthy as other methods, such as VICReg~\cite{bardes2022vicreg} and Barlow Twins~\cite{zbontar2021barlow}, are known to be sensitive to embedding dimension, typically requiring dimensions larger than 2048 for optimal performance.
Our results show that T-REGS maintains high accuracy even with small embedding dimensions, demonstrating its robustness to this hyperparameter.

\subsection{Batch size.}
\begin{table*}[ht]
    \centering
    \resizebox{0.65\linewidth}{!}{
        \begin{tabular}{
            lll ccc ccc ccc ccc c
        }
            \toprule
            & \textbf{Batch Size} & & \textbf{128} & & \textbf{256} & & \textbf{512} & & \textbf{1024} & \\
            \midrule
            & Top-1 & & 66.3 & & 67.2 & & 68.7 & & 68.0 & \\
            \bottomrule
        \end{tabular}

    }
    \caption{\textbf{Impact of batch  size.} \lossMSE{} +\lossTREGS{} top-1 accuracy (\%) using linear evaluation after 100 pre-training epochs. 
    }
    \label{tab:batch-size}
\end{table*}
Many SSL methods are known to be sensitive to batch sizes, for instance contrastive methods suffer from the need of many negative samples which can translate into the need of large batch sizes~\cite{chen2020simple,caron2020unsupervised}.
In~\Cref{tab:batch-size}, we study how batch size affects T-REGS performance (when combined with view invaraince). 
We train models for 100 epochs on ImageNet-1k with batch sizes ranging from 128 to 1024, using the same hyperparameters as in~\cref{tab:hparams}.
T-REGS maintains good performance even with small batch sizes (e.g., 128), demonstrating its robustness to different batch size configurations.

\subsection{Normalization}
Some popular SSL frameworks, such as SimCLR and BYOL, employ explicit normalization of features to the unit sphere, enforcing a hard constraint. Others do not implement such constraints explicitly, and instead rely on soft mechanisms—such as VICReg, which incorporates variance and covariance regularization terms in its loss, implicitly constraining the distribution (and to some extent, the norms) of the embeddings.
We chose a soft sphere constraint for the following reasons: \textit{(i)} a hard normalization has been shown to ignore the importance of the embedding norm for gradient computation, whereas a soft constraint enables better embedding optimization~\cite{zhang2020deep,zhang2023hyperspherical}, \textit{(ii)} during our initial experiment, we found that relaxing the sphere constraint from a hard one to a soft one provides more leeway for optimization and leads to improved results, as shown in~\Cref{tab:norm}.

\begin{table}[ht]
    \centering
    \begin{tabular}{ll cc}
        \toprule
         & & \textbf{CIFAR-10} & \textbf{CIFAR-100 }\\
         \midrule
         soft-constraint & & 91.2 & 66.8 \\
         hard-constraint & & 89.2 & 64.7 \\
         \bottomrule
    \end{tabular}
    \vspace{0.5em}
    \caption{\textbf{Impact of normalization.} \lossMSE{} +\lossTREGS{} top-1 accuracy (\%).}
    \label{tab:norm}
\end{table}

\subsection{Sensitivity to the seed.}
\begin{table*}[ht]
    \centering
    \resizebox{0.4\linewidth}{!}{
        \begin{tabular}{
            ll ccc ccc 
        }
            \toprule
            & & \textbf{CIFAR-10} & & \textbf{CIFAR-100} &  \\
            \midrule
            & & 91.1 $\pm$ 0.11  & & 66.4 $\pm$ 0.45 &  \\
            \bottomrule
        \end{tabular}

    }
    \caption{\textbf{Sensitivity to the seed.} \lossMSE{} +\lossTREGS{} top-1 accuracy using linear evaluation after 500 pre-training epochs on CIFAR-10/100. We report results averaged across 5 seeds in the format: \texttt{mean} $\pm$ \texttt{std}.
    }
    \label{tab:cifar-seed}
\end{table*}
We observe strong stability across different random seeds, with standard deviations of only 0.11\% and 0.45\% on CIFAR-10 and CIFAR-100 respectively.

\subsection{Study of the embeddings.}
\begin{figure}[t]
\centering
\begin{subfigure}{\textwidth}
\centering
    \begin{subfigure}{0.4\textwidth}
        \centering
        \includegraphics[width=\linewidth]{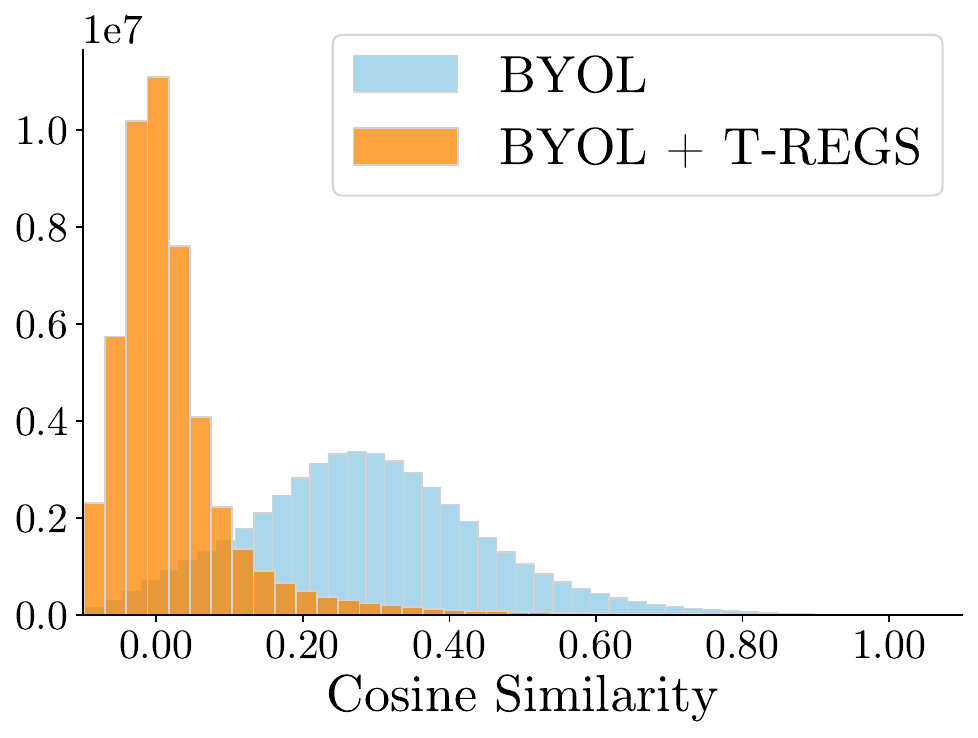}
    \end{subfigure}
    \begin{subfigure}{0.4\textwidth}
        \centering
        \includegraphics[width=\linewidth]{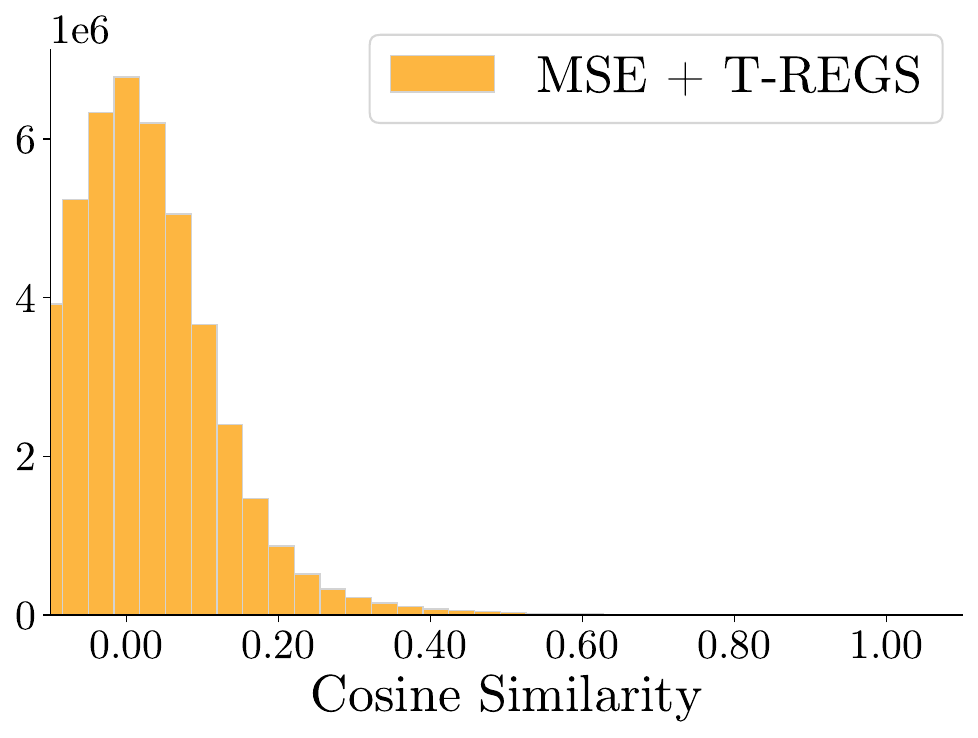}
    \end{subfigure}
\end{subfigure}
\captionsetup{font=small}
\caption{
\textbf{Histograms of embeddings' cosine similarities on CIFAR-10.} 
With T-REGS as a standalone regularization (orange) or as an auxiliary loss (dark orange), the distribution of pairwise cosine similarities becomes concentrated around zero, indicating that the embeddings are highly decorrelated and approach a regular simplex configuration (\cref{prop:cv_to_splx}). 
}
\label{fig:analysis-embeddings-cifar}
\end{figure}

We analyze the learned embeddings by computing pairwise cosine similarity between embeddings on CIFAR-10. As shown in~\Cref{fig:analysis-embeddings-cifar}, BYOL yield embeddings with mean similarities significantly above zero ($\approx$ 0.3). This indicates a concentration within a cone rather than uniformity on the hypersphere.
In contrast, T-REGS yields mean similarities near zero, reflecting more uniformly distributed and decorrelated embeddings,  with some values slightly negative-- suggesting an arrangement close to a regular simplex, as per~\Cref{prop:cv_to_splx}.
Additionally, applying T-REGS as an auxiliary loss effectively shifts the mean cosine similarity towards zero, as illustrated in~\Cref{fig:analysis-embeddings-cifar}, thus indicating its effectiveness.
\section{Empirical Experiments}\label{appendix:empirical-xp}
\subsection{Study of redundancy-reduction methods}
\begin{figure}[htbp]
    \begin{subfigure}{\textwidth}
        \centering
        \includegraphics[width=0.6\linewidth]{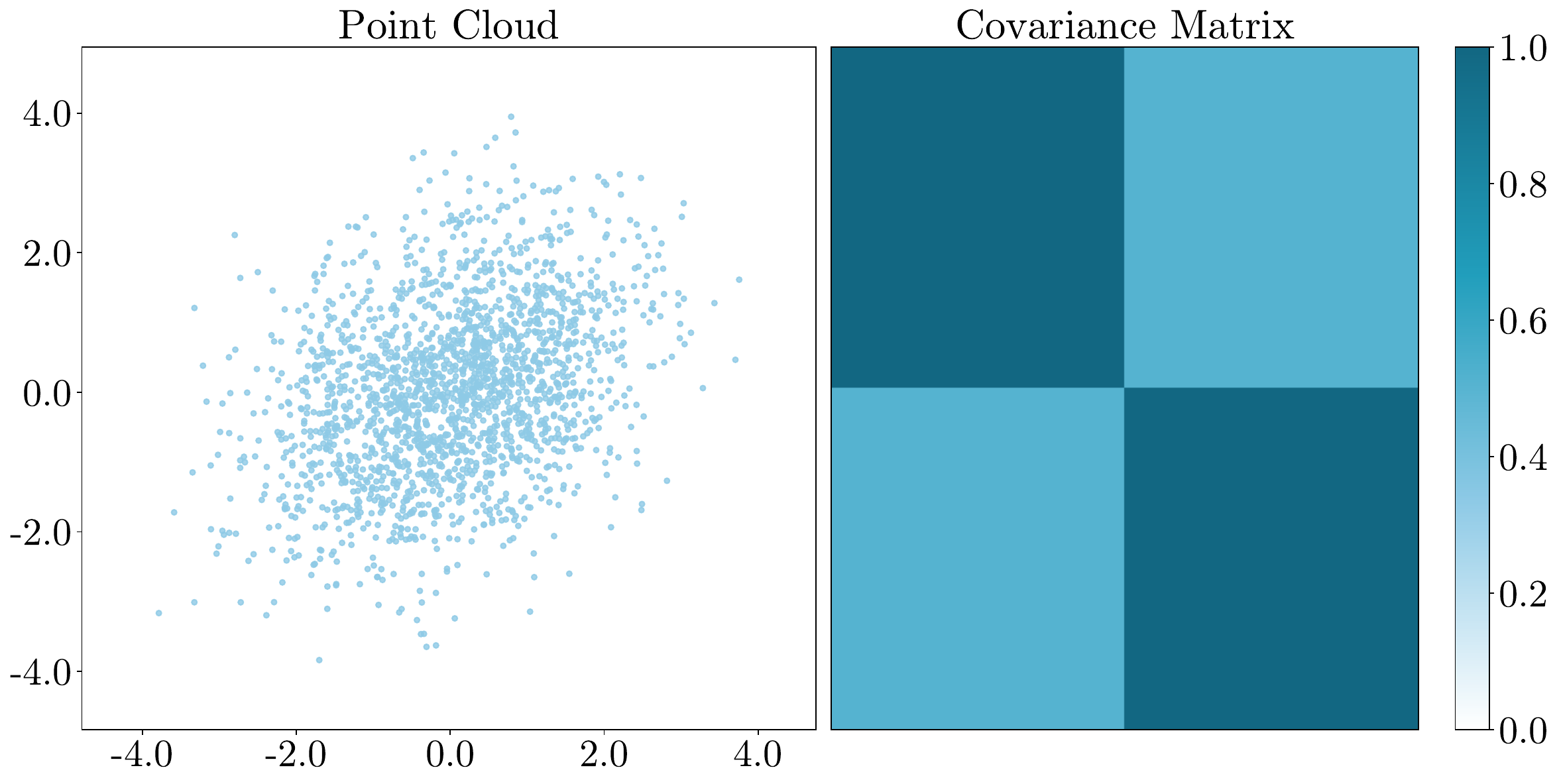}
        \caption{\underline{Initial Point Cloud: sampling of a non-isotropic Gaussian measure}}
        \label{fig:non-isotropic}
    \end{subfigure}
    \vspace{0.5cm}
    \hfill
    \begin{subfigure}{\textwidth}
        \centering
        \includegraphics[width=\linewidth]{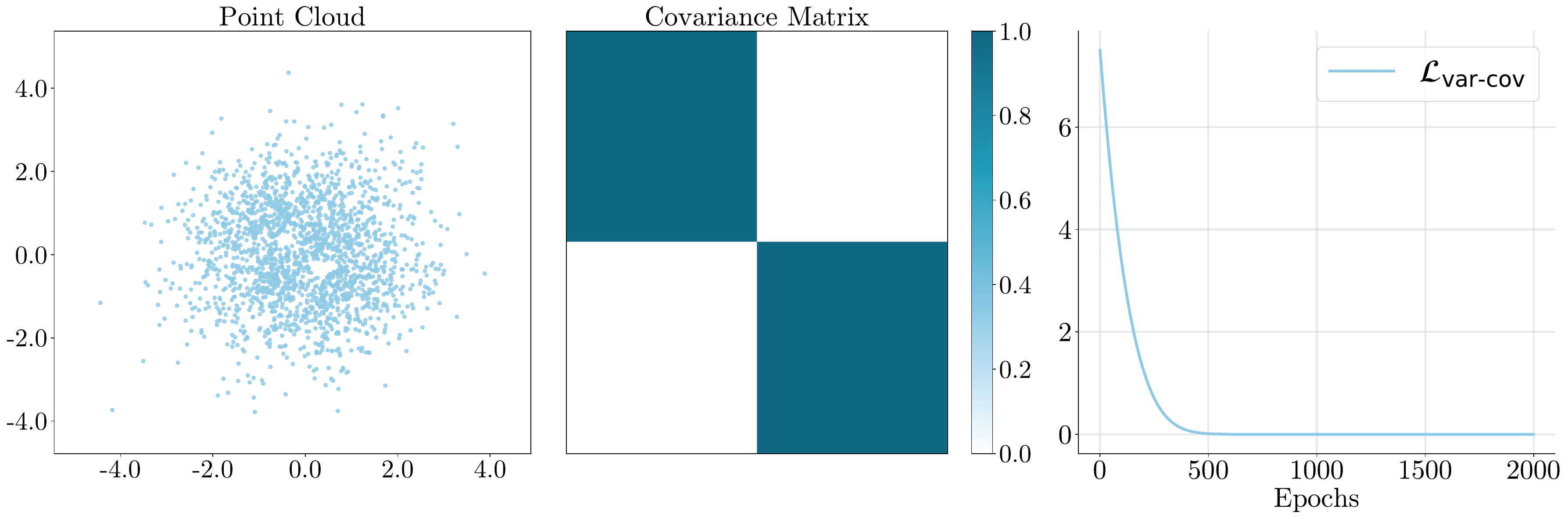}
        \caption{\underline{Optimization w/ \lossVarCov~}}
        \label{fig:optim-var-cov}
    \end{subfigure}
    \vspace{0.5cm}
    \hfill
    \begin{subfigure}{\textwidth}
        \centering
        \includegraphics[width=\linewidth]{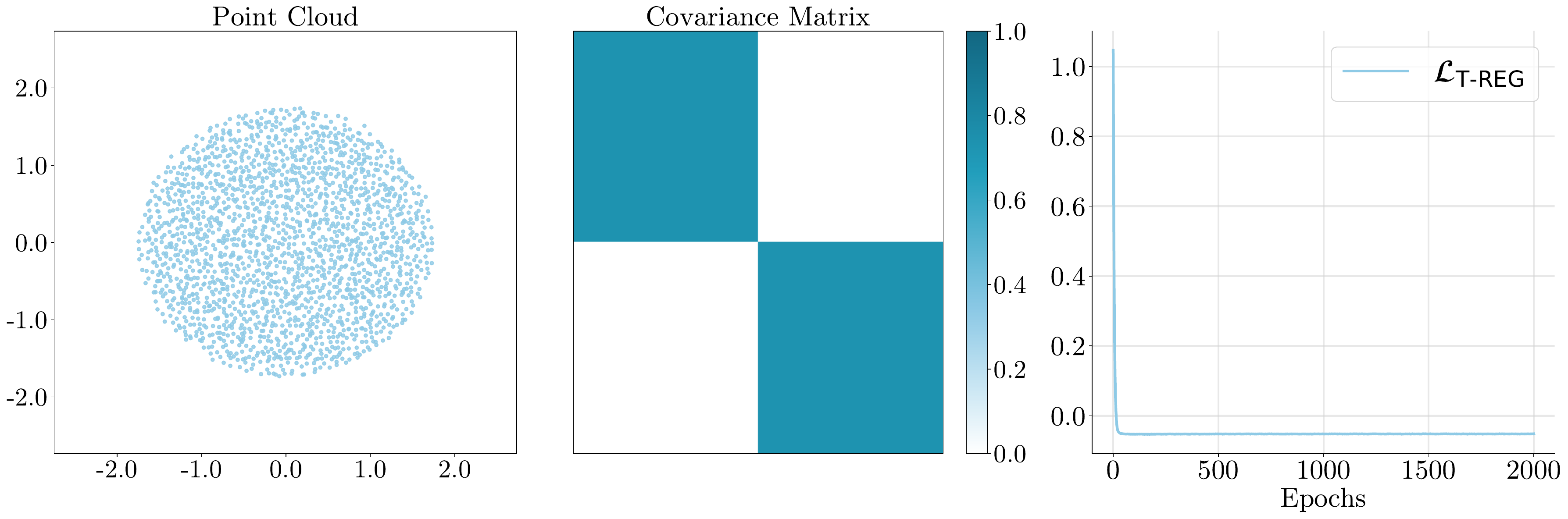}
        \caption{\underline{Optimization w/ \lossTREG~on the ball}
        }
        \label{fig:optim-treg}
    \end{subfigure}
    \caption{\textbf{Limitations of redundancy-based methods for non-isotropic Gaussian measure.} 
    (a) Initial sampling from a non-isotropic Gaussian distribution. 
    (b) After \lossVarCov~optimization: despite achieving a near-identity covariance matrix (center), the point cloud remains concentrated around its mean, with visible artifacts: holes (left). 
    (c) \lossTREG~optimization achieves both uniform distribution on the disk and near-identity correlation matrix.
    }
    \label{fig:covariance_based_optim}
\end{figure}

\noindent \textbf{Redundancy-Reduction methods}~\cite{bardes2022vicreg,zbontar2021barlow,weng2024modulate,ermolov2021whitening} attempt to produce embedding variables that are decorrelated from each other.
These methods maximize the informational content of embeddings by regularizing their empirical covariance matrix.

For instance, VICReg~\cite{bardes2022vicreg} leverages \textit{(i)} a term to encourage the variance (diagonal of the covariance matrix) inside the current batch to be equal to 1, 
preventing collapse with all the inputs mapped on the same vector; 
\textit{(ii)} and a correlation regularization, encouraging the off-diagonal coefficients of the empirical covariance matrix to be close to 0, decorrelating the different dimensions of the embeddings. 
More formally, let $Z = \{z_1, ..., z_n\}\subseteq\mathbb{R}^d$ be a set of $n$ embeddings in $d$-dimensional space. For each dimension $j \in \{1,...,d\}$, we denote $z^j$ as the vector containing all values at dimension $j$ across the embeddings. The variance term is defined as:
\begin{equation}
    \mathcal{L}_\text{var} = \frac{1}{d} \sum_{j=1}^{d} \text{max}(0, 1 - S(z^j, \epsilon))
\end{equation}
where $S(x,\epsilon) = \sqrt{\text{Var}(x) + \epsilon}$ is a stability-adjusted standard deviation, with $\epsilon > 0$ being a small constant that prevents numerical instabilities.

The covariance term is defined as:
\begin{equation}
    \mathcal{L}_\text{cov} = \frac{1}{d} \sum_{i\neq j} [C(Z)]^2_{i,j}
\end{equation}
where $C(Z)$ is the sample covariance matrix: $ C(Z) = \frac{1}{n-1} (Z - \overline{z})^T (Z - \overline{z})$, with $\overline{z} = \frac{1}{n} \sum_{i=1}^n z_i$ the mean value of the embeddings.

The overall variance-covariance regularization term is a weighted:
\begin{equation}
    \mathcal{L}_\text{var-cov} = \nu \, \mathcal{L}_\text{var} + \tau \, \mathcal{L}_\text{cov}
    \label{eq:var-cov}
\end{equation}
where $\nu, \tau$ are hyperparameters controlling the importance of each term in the loss.

\paragraph{Limitations of redundancy-reduction methods.}
In~\Cref{fig:covariance_based_optim} we study a limitation of redundancy-reduction methods.
We sample $2000$ points from a non-isotropic Gaussian distribution, and observe the resulting point cloud after optimization with \lossVarCov~(using $\nu=25, \tau=1$, as in~\citet{bardes2022vicreg}).
Since Gaussian distributions are fully characterized by their mean and covariance matrix, we expect that by optimizing the empirical covariance matrix, the initial point cloud will converge towards a sample of the standard Gaussian distribution, thus far from a uniform distribution.


\Cref{fig:optim-var-cov} shows the optimization of \Cref{fig:non-isotropic} using the covariance-based approach from VICReg (\Cref{eq:var-cov}).
The optimized empirical distribution is, as expected, closer to the standard Gaussian distribution, but exhibits artifacts such as low-dimensional
concentration points or even holes suggesting local instabilities.
These artifacts persist across different parameter choices, though their specific manifestations may vary with the learning rate.

\Cref{fig:optim-treg} shows the optimization of \Cref{fig:non-isotropic} using \lossTREG. 
We leverage \cref{cor:uniform_distrib_max}, whose guarantees are valid for
any Riemannian manifold, and in particular the disk. 
This ensures that the limiting distribution for T-REG is the uniform distribution on the disk, which prevents dimensional collapse and naturally decorrelates the distribution while maintaining the uniform distribution.

\subsection{Promoting sample uniformity}
\begin{figure}[t]
    \begin{subfigure}{\textwidth}
    \begin{subfigure}{0.30\textwidth}
        \centering
        \includegraphics[width=0.9\linewidth]{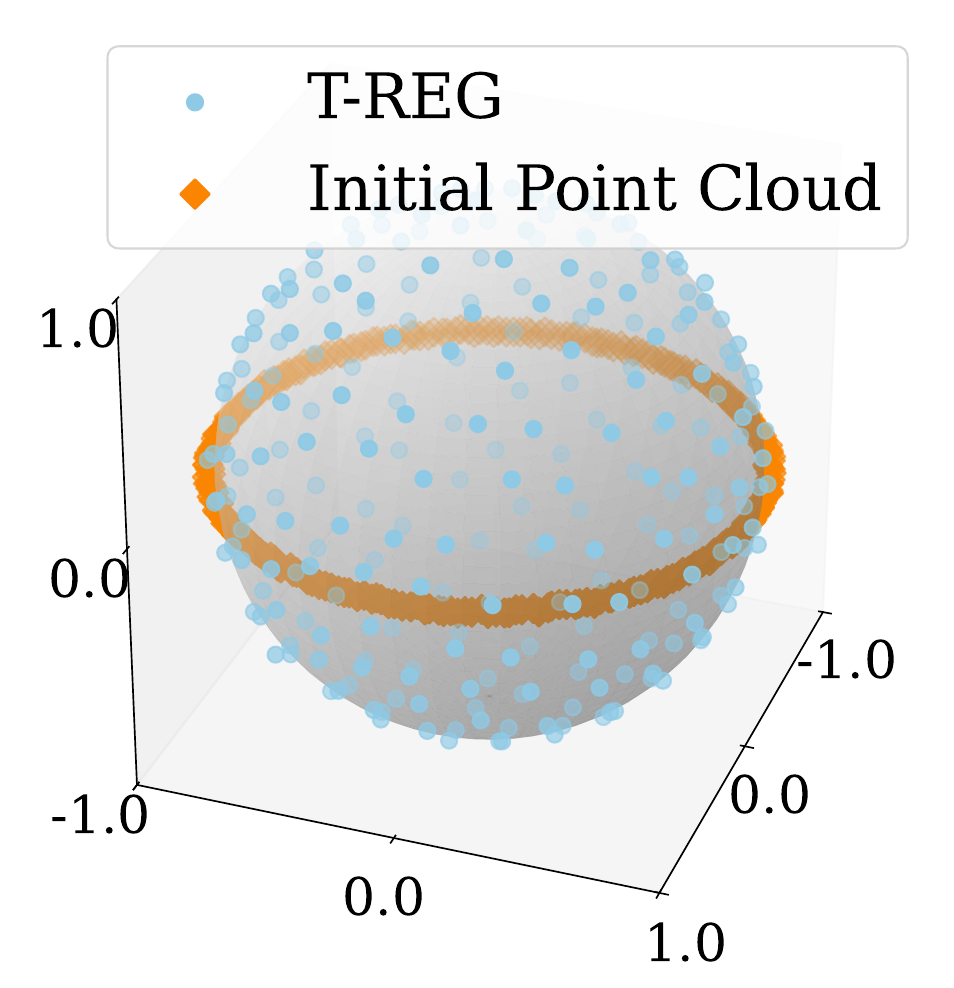}
        \caption{Optimization w/ T-REG}
        \label{fig:point_cloud_3d-circle}
    \end{subfigure}
    \hfill    
    \begin{subfigure}{0.30\textwidth}
        \centering
        \includegraphics[width=0.9\linewidth]{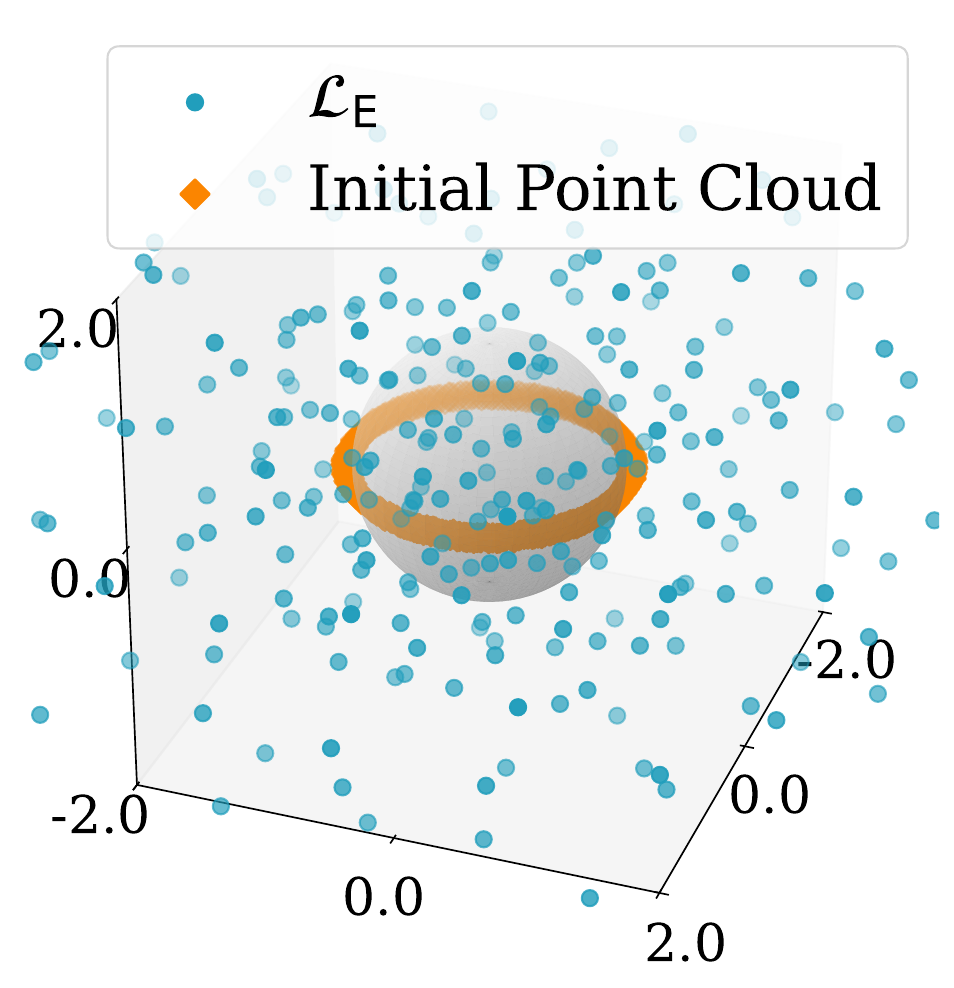}
        \caption{Optimization w/ $\mathcal{L}_\text{E}$}
        \label{fig:point_cloud_3d_le-circle}
    \end{subfigure}
    \hfill
    \begin{subfigure}{0.30\textwidth}
        \centering
        \includegraphics[width=\linewidth]{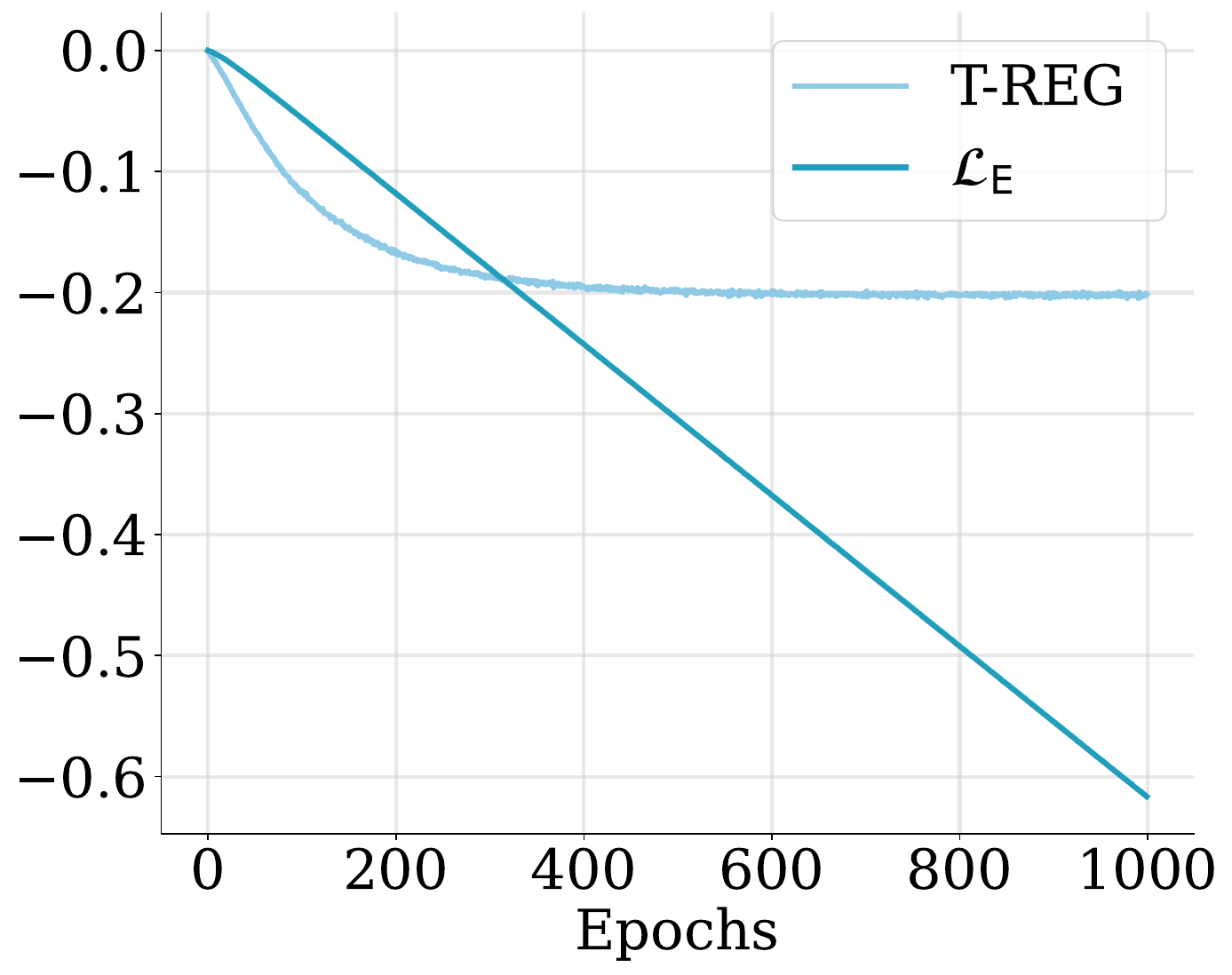}
        \caption{Losses}
        \label{fig:losses_3d-circle}
    \end{subfigure}
    \end{subfigure}
    \vspace{0.5cm}
    \vspace{-1cm}
    \captionsetup{font=small}
  \caption{
  \textbf{Further Studying T-REG properties through $3$-d point cloud optimization.} (a) T-REG successfully spreads points uniformly on the sphere by combining MST length maximization and sphere constraint, (b) using only MST length maximization leads to excessive dilation, 
    (c) stable convergence of T-REG whereas $\mathcal{L}_\text{E}$ does not converge.
  }
  \label{fig:point_cloud_optim_circle}    
\end{figure}

Building upon the empirical study presented in~\Cref{para:sample-unif}, 
we conduct an additional point cloud optimization experiment with a different initial configuration (\Cref{fig:point_cloud_optim_circle}). 
We sample $256$ points along a circle while setting the remaining dimension to zero. This setup allows for the generation of a point cloud with one collapsed dimension.
The experimental results demonstrate consistent behavior with the findings discussed in~\Cref{para:sample-unif}: \textit{(i)} T-REG successfully transforms the initial circle into a uniformly distributed point cloud on the sphere (see~\Cref{fig:point_cloud_3d-circle}), \textit{(ii)} the sphere constraint is essential here, since using only the MST length loss (as in~\Cref{fig:point_cloud_3d_le-circle}) leads to a failure of convergence of the optimization.

This further validates the effectiveness of our approach in promoting uniform point distribution.

\section{Uniformity properties}\label{subsec:rethinking_props}
{

\newcounter{property}
\newenvironment{propertylist}{%
	\begin{enumerate}
		\setcounter{enumi}{\value{property}} 
		}{%
		\setcounter{property}{\value{enumi}} 
	\end{enumerate}
}
\crefname{enumi}{Prop.}{Props.}
\Crefname{enumi}{Property}{Properties}

\def\ourscore{\ensuremath{\mathcal{U}_{\textnormal{T-REG}}}}
\def\normalization{\ensuremath{E_\alpha{\left(\mathrm{MST}_\alpha(\sigma^0_d)\right)}}}
Recall from Equation~\eqref{eq:l_e} that {$\mathcal{L}_{\text{E}}(Z)=-\costmst{Z}/|Z|$}. In practice, for datasets of fixed size, minimizing $\mathcal{L}_{\text{E}}$ is equivalent to minimizing $-\costmst{Z}$ itself. In this section, we show that, up to a renormalization, $-\costmst{Z}$ satisfies the
	four principled properties for
uniformity metrics introduced in \cite[Section 3.1]{fangrethinking}.  

A \emph{uniformity metric} $\mathcal{U}$
is a scalar score function on $n$-samples in $\mathbb{R}^d$ that is \emph{large} on
uniform-like point clouds and \emph{small} on degenerate or almost-degenerate point clouds.
In the following, we fix an $n$-sample $Z = (z_1,\dots,z_n)$ in $\mathbb{R}^d$.

We define our uniformity metric as $-\costmst{\cdot}$ normalized by the  
edge length of the regular $d$-simplex~$\sigma^0_d$ with vertices on the unit sphere~$\mathbb{S}^{d-1}$:
\begin{equation}
	\ourscore(Z)
:=-\frac{\costmst{Z}}{\left(\frac{2(d+1)}{d}\right)^{1/2}}.
\end{equation}

In practice, for a fixed ambient dimension~$d$, minimizing $-\costmst{Z}$ or minimizing~$\ourscore(Z)$ is equivalent. The motivation behind renormalization is to measure the length of the minimum spanning tree relative to a \emph{reference} length on the unit sphere~$\mathbb{S}^{d-1}$, to make it a dimensionless quantity.

We now show that our uniformity score~$\ourscore{}$ satisfies the desired uniformity properties:
\begin{propertylist}
	\item \emph{Instance permutation constraint:}\label{prop:permutation} $\forall \pi \in \mathfrak S_n, \, \mathcal{U}\left( \left( z_{\pi_1},\dots,z_{\pi_n} \right) \right) = \mathcal{U}(Z)$.
	\\
	By construction, $\mst{}(Z)$ and its length are invariant under permutations of the points' indices, therefore $\costmst{\cdot}$ and $\ourscore$ are also permutation invariant.
	\item \emph{Instance cloning constraint:}\label{prop:instance_cloning} if $Z'=Z$, then $\mathcal{U}\left( \left( z_1,\dots,z_n,z_1',\dots,z_n' \right) \right)= \mathcal{U}(Z)$.\\
	If $G=(Z,E)$ is an \mst{} of $Z$, then
	\begin{equation}
		G' = ((z_1,\dots,z_n,z_1',\dots,z_n'),
		E\cup \left\{ (z_i, z_{i}') : 1\le i \le n \right\} )
	\end{equation}
	is a spanning tree of
	$(z_1,\dots,z_n,z_1',\dots,z_n')$, and it is minimal for $E$ since $G$ itself is minimal on $Z$ and $E(G')=E(G) + n\times 0^1 =E(G)$.
	Therefore, $\ourscore((z_1,\dots,z_n,z_1,\dots,z_n)) = \ourscore(Z)$ since the ambient dimension is constant.

	\item \emph{Feature cloning constraint:}\label{prop:feature_cloning}  $\mathcal U(z_1\oplus z_1 , \dots, z_n\oplus z_n) < \mathcal{U}(z)$. \\
	Feature cloning corresponds to pushing the points of~$Z$ to the diagonal in~$\mathbb{R}^{2d}$, which impacts the pairwise distances by a uniform scaling by a factor of~$\sqrt{2}$.
	In particular, the \mst{} remains the same combinatorially and we have:
	\begin{align*}
		-\costmst{Z\oplus Z}
		= -2^{1/2} \costmst{Z}
		< -\costmst{Z}.
	\end{align*}
	Meanwhile, since $\varphi\colon d\mapsto  \left({\frac{2(d+1)}{d}}\right)^{-1/2}$ is an increasing function and $\ourscore \le 0$,
    we have:
	\begin{equation*}
		\ourscore{(Z\oplus Z)} = -\costmst{Z\oplus Z}\varphi(2d) < -\costmst{Z}\varphi(d) = \ourscore{(Z)}.
	\end{equation*}
	\item \emph{Feature baby constraint:}\label{prop:baby} $\forall k\in \mathbb{N}_+, \,\mathcal{U}(Z\oplus \mathbf{0}^k) < \mathcal{U}(Z)$.\\
	Adding constant features does not impact the pairwise distances, hence
    does not impact
	the minimum spanning tree and its length
     We thus have
	$\costmst{Z\oplus \mathbf{0}^k} = \costmst Z$.

	As in the previous case, since $\varphi\colon d\mapsto  \left({\frac{2(d+1)}{d}}\right)^{-1/2}$ is an increasing function
	and ${\ourscore \le 0}$, we have:
	\begin{equation*}
		\ourscore(Z\oplus \mathbf{0}^k) = -\costmst{Z\oplus \mathbf{0}^k}\varphi(d+k) < -\costmst{Z}\varphi(d) = \ourscore(Z).
	\end{equation*}
\end{propertylist}

\section{Example of a Minimum Spanning Tree}
\Cref{fig:ex-mst} provides an example of the MST of a 2$d$ uniformly sampled point cloud.

\begin{figure}[ht]
    \centering
    \includegraphics[width=0.5\linewidth]{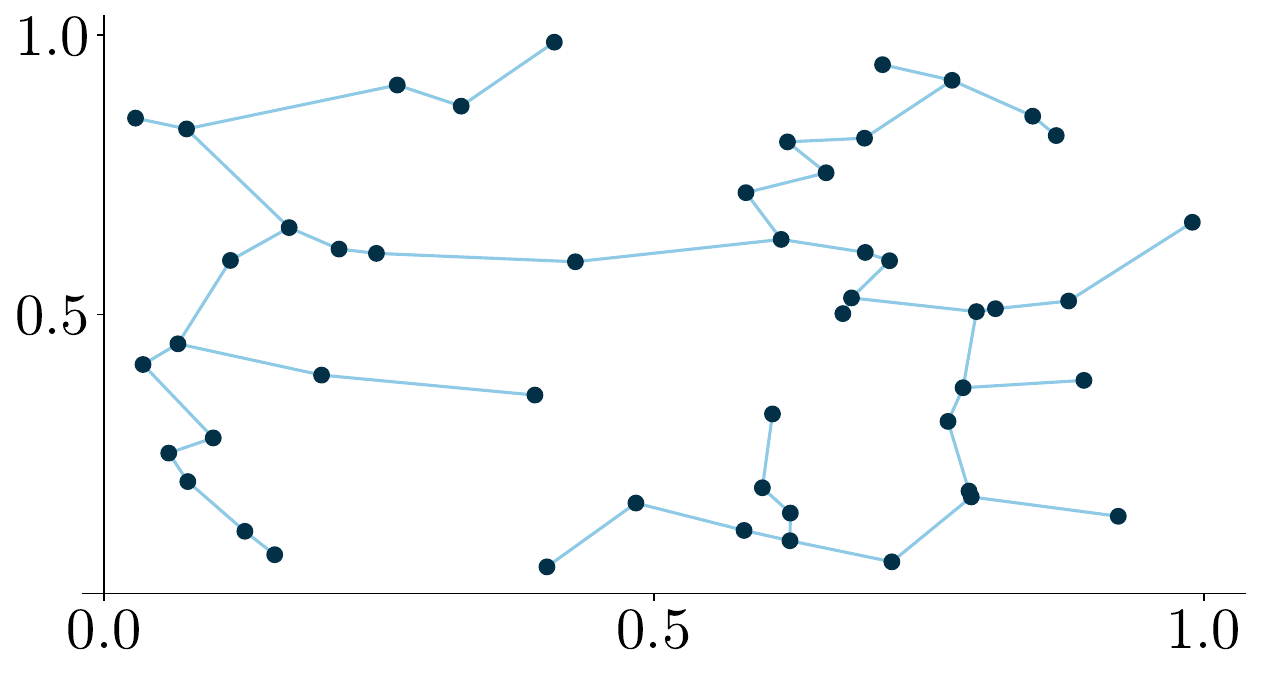}
    \caption{\textbf{Example of the \mst{} of a 2d uniformly sampled point cloud.}}
    \label{fig:ex-mst}
\end{figure}

\end{document}